\newcommand{\tabincell}[2]{\begin{tabular}{@{}#1@{}}#2\end{tabular}}
\newtheorem{lemma}{Lemma}
\newtheorem{theorem}{Theorem}
\begin{document}
%
% paper title
% Titles are generally capitalized except for words such as a, an, and, as,
% at, but, by, for, in, nor, of, on, or, the, to and up, which are usually
% not capitalized unless they are the first or last word of the title.
% Linebreaks \\ can be used within to get better formatting as desired.
% Do not put math or special symbols in the title.
\title{A Manifold-based Airfoil Geometric-feature Extraction and Discrepant Data Fusion Learning Method}
%
%
% author names and IEEE memberships
% note positions of commas and nonbreaking spaces ( ~ ) LaTeX will not break
% a structure at a ~ so this keeps an author's name from being broken across
% two lines.
% use \thanks{} to gain access to the first footnote area
% a separate \thanks must be used for each paragraph as LaTeX2e's \thanks
% was not built to handle multiple paragraphs
%

\author{
Yu~Xiang \href{https://orcid.org/0000-0001-9622-7661}{\includegraphics[scale=0.5]{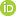}},~\IEEEmembership{Member,~IEEE,}
Guangbo~Zhang \href{https://orcid.org/0000-0002-3365-7071}{\includegraphics[scale=0.5]{ORCIDiD.png}},
Liwei~Hu \href{https://orcid.org/0000-0003-4994-9252}{\includegraphics[scale=0.5]{ORCIDiD.png}},~\IEEEmembership{Student Member,IEEE,}
Jun~Zhang \href{https://orcid.org/0000-0002-3516-0392}{\includegraphics[scale=0.5]{ORCIDiD.png}},~\IEEEmembership{Student Member,IEEE,}
and~Wenyong~Wang* \href{https://orcid.org/0000-0003-4095-547X}{\includegraphics[scale=0.5]{ORCIDiD.png}}, ~\IEEEmembership{Member,~IEEE}

\thanks{This work was supported by the basic scientific research project of central universities A030202063008039.}
\thanks{Y. Xiang, associate professor, is with School of Computer Science and Engineering, University of Electronic Science and Technology of China, Chengdu 611731, China, e-mail: jcxiang@uestc.edu.cn}% <-this % stops a space
\thanks{G. Zhang, is with School of Computer Science and Engineering, University of Electronic Science and Technology of China, Chengdu 611731, China, e-mail: guangbozhang@uestc.edu.cn}% <-this % stops a space
\thanks{L. Hu, is with School of Computer Science and Engineering, University of Electronic Science and Technology of China, Chengdu 611731, China, e-mail: liweihu@std.uestc.edu.cn.}% <-this % stops a space
\thanks{J. Zhang, senior engineer, is with School of Computer Science and Engineering, University of Electronic Science and Technology of China, Chengdu 611731, China, e-mail: zhangjun@uestc.edu.cn.}% <-this % stops a space
\thanks{W. Wang, professor, corresponding author, is with School of Computer Science and Engineering, University of Electronic Science and Technology of China, Chengdu 611731, China, e-mail: wangwy@uestc.edu.cn.}% <-this % stops a space
\thanks{W. Wang, special-term professor, is with International Institute of Next Generation Internet, Macau University of Science and Technology, Macau 519020, China.}% <-this % stops a space
\thanks{Manuscript received April 19, 2005; revised August 26, 2015.}
\thanks{NOTICE1:the reference of the TAES paper need to be updated after published.}
}

% note the % following the last \IEEEmembership and also \thanks -
% these prevent an unwanted space from occurring between the last author name
% and the end of the author line. i.e., if you had this:
%
% \author{....lastname \thanks{...} \thanks{...} }
%                     ^------------^------------^----Do not want these spaces!
%
% a space would be appended to the last name and could cause every name on that
% line to be shifted left slightly. This is one of those "LaTeX things". For
% instance, "\textbf{A} \textbf{B}" will typeset as "A B" not "AB". To get
% "AB" then you have to do: "\textbf{A}\textbf{B}"
% \thanks is no different in this regard, so shield the last } of each \thanks
% that ends a line with a % and do not let a space in before the next \thanks.
% Spaces after \IEEEmembership other than the last one are OK (and needed) as
% you are supposed to have spaces between the names. For what it is worth,
% this is a minor point as most people would not even notice if the said evil
% space somehow managed to creep in.

% The paper headers
\markboth{Journal of \LaTeX\ Class Files,~Vol.~14, No.~8, August~2015}%
{Shell \MakeLowercase{\textit{et al.}}: Bare Demo of IEEEtran.cls for IEEE Journals}
% The only time the second header will appear is for the odd numbered pages
% after the title page when using the twoside option.
%
% *** Note that you probably will NOT want to include the author's ***
% *** name in the headers of peer review papers.                   ***
% You can use \ifCLASSOPTIONpeerreview for conditional compilation here if
% you desire.

% If you want to put a publisher's ID mark on the page you can do it like
% this:
%\IEEEpubid{0000--0000/00\$00.00~\copyright~2015 IEE}E
% Remember, if you use this you must call \IEEEpubidadjcol in the second
% column for its text to clear the IEEEpubid mark.

% use for special paper notices
%\IEEEspecialpapernotice{(Invited Paper)}

% make the title area
\maketitle

% As a general rule, do not put math, special symbols or citations
% in the abstract or keywords.
\begin{abstract}
Geometrical shape of airfoils, together with the corresponding flight conditions, are crucial factors for aerodynamic performances prediction. The obtained airfoils geometrical features in most existing approaches (e.g., geometrical parameters extraction, polynomial description and deep learning) are in Euclidean space. State-of-the-art studies showed that curves or surfaces of an airfoil formed a manifold in Riemannian space. Therefore, the features extracted by existing methods are not sufficient to reflect the geometric-features of airfoils. Meanwhile, flight conditions and geometric features are greatly discrepant with different types, the relevant knowledge of the influence of these two factors that on final aerodynamic performances predictions must be evaluated and learned to improve prediction accuracy. Motivated by the advantages of manifold theory and multi-task learning, we propose a manifold-based airfoil geometric-feature extraction and discrepant data fusion learning method (MDF) to extract geometric-features of airfoils in Riemannian space (we call them manifold-features) and further fuse the manifold-features with flight conditions to predict aerodynamic performances. Experimental results show that our method could extract geometric-features of airfoils more accurately compared with existing methods, that the average MSE of re-built airfoils is reduced by 56.33\%, and while keeping the same predicted accuracy level of $C_{L}$, the MSE of $C_{D}$ predicted by MDF is further reduced by 35.37\%.
\end{abstract}

% Note that keywords are not normally used for peerreview papers.
\begin{IEEEkeywords}
Airfoil geometric-feature, Manifold, Riemannian metric, Multi-task fusion, Aerodynamic performance prediction.
\end{IEEEkeywords}

% For peer review papers, you can put extra information on the cover
% page as needed:
% \ifCLASSOPTIONpeerreview
% \begin{center} \bfseries EDICS Category: 3-BBND \end{center}
% \fi
%
% For peerreview papers, this IEEEtran command inserts a page break and
% creates the second title. It will be ignored for other modes.
\IEEEpeerreviewmaketitle

\section{Introduction}
% The very first letter is a 2 line initial drop letter followed
% by the rest of the first word in caps.
%
% form to use if the first word consists of a single letter:
% \IEEEPARstart{A}{demo} file is ....
%
% form to use if you need the single drop letter followed by
% normal text (unknown if ever used by the IEEE):
% \IEEEPARstart{A}{}demo file is ....
%
% Some journals put the first two words in caps:
% \IEEEPARstart{T}{his demo} file is ....
%
% Here we have the typical use of a "T" for an initial drop letter
% and "HIS" in caps to complete the first word.
\IEEEPARstart{G}{eometrical} shape of airfoil greatly affects the aerodynamic performances \cite{hu2022flow,wang2021airfoil,hu2020neural}. Most commonly used airfoil geometry description method is to define a set of airfoil geometry parameters, such as chord length, maximum thickness, leading edge radius, etc. Although, these parameters are effective to perceive the variations in airfoil geometry structures, the designer of airfoils need to manipulate them manually to obtain a smooth and continuous curve, which causes limited applications \cite{shelton1993optimization}. Polynomials are other alternative efficient mathematical approaches to approximate airfoil curves, (e.g., Bézier curve \cite{derksen2010bezier} and B-spline \cite{farin2014curves}, etc) which usually employ linear combinations of high-degree polynomials to approximately describe the geometry variations of airfoil structures. Nevertheless, polynomial approaches can only give approximate expressions of airfoils and they are not powerful enough to deeply exploit features of airfoils from different space. Therefore, it is difficult for polynomials to describe some modern complex airfoils comprehensively \cite{zhu2014intuitive}.

Manifold theory has been applied to the optimization of modern complex airfoil geometry structures with manifold mapping method \cite{nagawkar2021single, li2022machine}. Du et al. applied manifold mapping to align the high-fidelity mode and the low-fidelity model to obtain the optimal target based on the performance distribution (i.e., Mach number and pressure coefficient) in inverse design \cite{du2019aerodynamic}. Raul et al. used manifold mapping within a surrogate-based optimization framework utilized for aerodynamic shape optimization to alleviate airfoil dynamic stall \cite{raul2021multifidelity}. The above studies focused on the optimization problems of airfoils, and to the best of our knowledge, there are few studies on the airfoils geometric-features extraction using manifold theory.

In addition to the geometrical shape of airfoils, flight conditions are other factors affecting aerodynamic performances. Once airfoil geometrical shape is fixed, the aerodynamic performances will be highly influenced under different flight conditions, and it will become more complicated with variable geometrical shapes under different flight conditions. The influence of these two different types of data on aerodynamic performances is called the discrepancy of aerodynamic data. As a result, it is hard to quantificationally evaluate the discrepancy of geometrical shape and flight conditions on aerodynamic performances, which also leads to the difficulty of fusing them. Recently, some researchers have recognized the importance of aerodynamic data discrepancy \cite{dakua2013performance}, and tried to create new models that learn discrepant aerodynamic data in a distributed way according to the type of input data, for examples, AeroCNN-I \cite{zhang2018application}, physics guided machine learning (PGML) \cite{pawar2021physics}, multi-layer perceptron (MLP) \cite{xin2022surrogate} and Multi-task learning (MTL) scheme \cite{hu2022aerodynamic, zhang2021amulti}. All the above studies, however, take the coordinates or images of airfoils as representations of airfoil structures in Euclidean space.

In recent years, deep learning has achieved great success in feature extraction of modern complex airfoils and discrepant aerodynamic data fusion. For examples, with radial basis function-based generative adversarial networks (RBF-GANs)\cite{hu2022flow}, convolution neural networks (CNNs)\cite{zhang2018application} and Auto-Encoders \cite{yonekura2019framework}, the feature maps of airfoils can be extracted. With MTL \cite{hu2022aerodynamic}, the discrepant aerodynamic data can be fused effectively. Nevertheless, the problems of these approaches are: 1) they rarely use manifold theory to extract latent geometric-features from Riemannian Space; and 2) the latent geometric-features of airfoils are not considered in the fusion method of discrepant aerodynamic data.

Motivated by MTL and manifold theory, we propose a manifold-based airfoil geometric-feature extraction and discrepant data fusion learning method (MDF) to extract latent geometric-features in Riemannian space (we call them manifold-features) and further fuse the manifold-features of airfoils with flight conditions to predict aerodynamic performances, see Fig.\ref{fig_structure}. Our proposed MDF consists of three modules: manifold-based airfoil geometric-feature extraction module, fight conditions input module and multi-tusk learning module. In manifold-based airfoil geometric-feature extraction module, a set of self-intersection-free Bézier curves are employed to build an smooth segmented manifold from airfoil coordinates. Then Riemannian metric of the manifold we built is calculated as a sort of manifold-feature. The smooth segmented manifold and the extracted Riemannian metric together form a smooth Riemannian manifold. In fight conditions input module, the flight conditions of airfoils are normalized. The MTL module, a discrepant data (i.e., geometric-features and flight conditions) fusion learning method, is applied to fuse the extracted geometric-features of airfoils and flight conditions to further predict aerodynamic performances \cite{hu2022aerodynamic}. The output of MTL is the predicted aerodynamic performance parameters (e.g., lift coefficient $C_{L}$ and drag coefficient $C_{D}$) of airfoils, which are used to evaluate whether the geometric-features and the flight conditions can be fused to predict aerodynamic performances precisely.

To summarize, the contributions to our work are:
\begin{enumerate}
\itemsep=0pt
\item we prove that a set of self-intersection-free Bézier curves which are connected end to end forms a smooth segmented Riemannian manifold that could describe the geometric shape of airfoils;
\item we propose a manifold-based airfoil geometric-feature extraction method using manifold metric calculated with Riemannian manifold constructed above;
\item we propose a novel discrepant data fusion learning method MDF, to fuse the Riemannian manifold features of foils in Riemannian space and the flight conditions together to predict aerodynamic performances that superior to existing methods.
\end{enumerate}

The structure of the remainder of this paper is as follows. Section.\ref{section_related_work} introduces the research status of airfoil feature extraction and discrepant data fusion in the field of airfoil-related modeling. In Section.\ref{section_methodology}, the prove of smooth segmented manifold and the details of MDF are elaborated. In Section. \ref{section_experimental_result}, a public UIUC airfoil dataset is applied to validate the effectiveness of MDF and the feasibility of geometric-features in terms of predicting $C_{L}$ and $C_{D}$ of airfoils. The conclusions of our work are shown in Section. \ref{section_conclusion}.

\section{Related Works}
\label{section_related_work}

In this section, we introduce current research status on feature extractions of airfoils and discrepant aerodynamic data fusion.

\subsection{Feature Extractions of Airfoils}
The Bézier curve, B-spline and NURBS are typical polynomials that were used to derive equations of airfoil curves \cite{derksen2010bezier,farin2014curves,zhang2021kriging}, which can be regarded as effective approaches for extracting features of airfoils. Among them, the Bézier curve is the most basic and common expression. An airfoil usually consists of multiple control points $\{x,y\}$ to a polynomial function. Usually, a $n$-degree Bézier curve that connects $n+1$ control points is chosen as basis to form a smooth curve which are used to approximate a part of a airfoil function. Then the airfoil curve function can be described as a liner combination of the basis. These polynomial expressions mentioned above are flexible and they can be combined with other parameterization methods to describe airfoil characteristics more accurately \cite{farin2014curves}.

The class function/shape function transformation (CST) \cite{kulfan2009modification}, developed based on polynomials, is the mainstream method for extracting features in the field of airfoil parameterization. The CST uses both the class function and the shape function to control the airfoil shape. The class function is applied to generate the basic shape of airfoils, and the shape function is used to correct the basic shape so as to obtain an accurate airfoil shape. The coefficients of the class function and shape function are parameters to be determined in CST.

These polynomial approaches that are mathematically interpretable were widely used to parameterize airfoils. However, there have been some frontier studies that show that curves or surfaces of an airfoil exist in manifold space \cite{du2019aerodynamic, nagawkar2021single}. Hence, existing polynomial approaches can only capture geometric-features from Euclidean space, and some latent geometric-features (e.g., manifold-features) are omitted. On the contrary, manifold theory can extract geometric-features from the perspective of manifold space and further to enrich airfoil features.

\begin{figure*}[h!]
    \centering
    \includegraphics[scale=0.9]{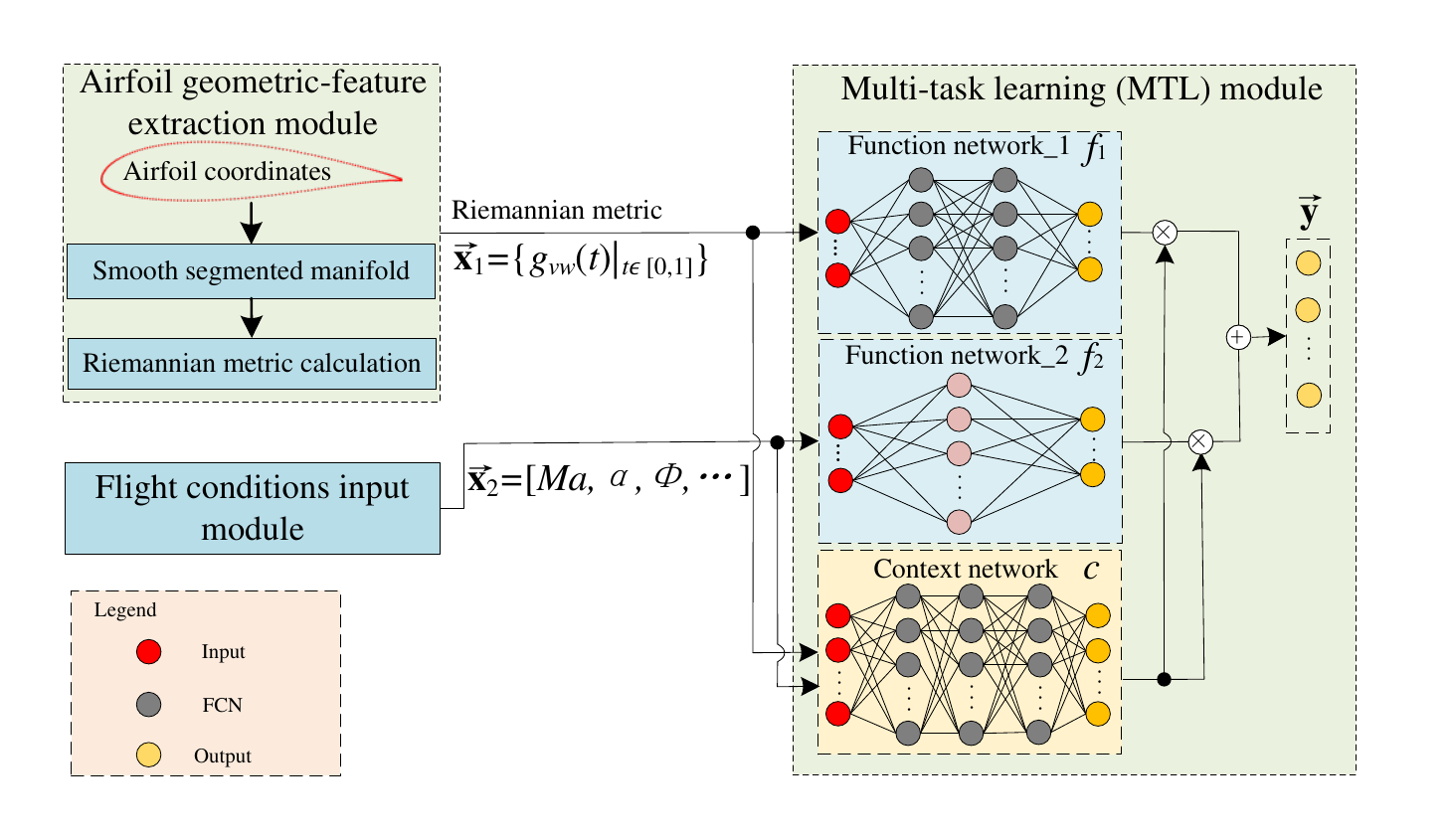}
    \caption{The structure of MDF.}
    \label{fig_structure}
\end{figure*}

\subsection{Discrepant Aerodynamic Data Fusion}

The MTL scheme is an effective approach to fuse discrepant aerodynamic data \cite{white2020fast,hu2022aerodynamic}. For general neural networks (except neural networks with dropout), all neurons will be activated for each input sample, which is difficult to adapt to the discrepancy of aerodynamic data. On the contrary, MTL is a partially activated neural network that activates different neurons according to different inputs.

The origin of MTL can be traced back to mixtures of experts (i.e., dedicated neural networks) in natural language processing \cite{jacobs1991adaptive}. The idea of mixtures of experts is that only one specific expert network is activated for analyzing a word according to the part-of-speech tags of the input word. The final results are obtained by linearly weighting the results of all experts \cite{shazeer2017outrageously}. The experts can be implemented by different models, such as SVMs \cite{rajaei2013human}, gaussian processes \cite{deisenroth2015distributed}, neural networks \cite{zhang2019learning}, etc. After 2019, the mechanism of partial activation of neurons is developed in the form of MTL in the field of aerodynamics.

The MTL is a novel neural network in that different tasks are assigned to different subnetworks. White et al. proposed the ClusterNet (a precursor to MTL) in 2020, which is used to predict the velocity of air flows \cite{white2020fast}. Zhang et al. adopted a clusterNet-based physics-informed model to predict future trajectories of the swarm by approximating the nonlinear dynamics of the swarm model \cite{zhang2020learning}. Based on ClusterNet, Hu and Zhang, et al. divided the aerodynamic data into different subtasks according to the discrepancy of them and proposed the MTL method to further fuse them to predict aerodynamic performances \cite{hu2022aerodynamic, zhang2021amulti}.

By adopting the state-of-the-art MTL scheme, the above studies focused on the discrepancy of aerodynamic data. However, they take the coordinates or images of airfoils as representations of airfoil structures in Euclidean space, and did not consider to extract latent geometric-features of airfoils from Riemannian Space.

As a conclusion, our proposed MDF is different from both the existing feature extraction of airfoils and discrepant data fusion. The output of existing feature extractions are polynomials with certain coefficients, which are used to represent structure of airfoils. Relatively, a polynomial with certain coefficients is just an intermediate of MDF. The final output of MDF are the predicted performance parameters. Besides, the input of existing discrepant data fusion methods are the discrepant data from Euclidean space. However, MDF fuse the manifold-features extracted from Riemannian space and the flight conditions from Euclidean space together to predicted aerodynamic performances of airfoils.

\section{Methodology}
\label{section_methodology}

\subsection{Overview of MDF}
Fig.\ref{fig_structure} describes the proposed structure of MDF. MDF is mainly compose of three modules: manifold-based airfoil geometric-feature extraction module, fight conditions input module and MTL module. In manifold-based airfoil geometric-feature extraction module, first, a set of self-intersection-free Bézier curves that connected end to end are chosen to form a smooth segmented manifold within airfoil shape space. Then, manifold metric calculation module is responsible for calculating the Riemannian metric on the manifold which forms a Riemannian manifold. Riemannian metric, calculated by the inner-product of arbitrary two vectors from tangent space of the Riemannian manifold, forms the base of other manifold-features of Riemannian manifold, through which the length, volume, connection and other manifold-features of Riemannian manifold can be calculated \cite{delso2022gravitomagnetic}. Therefore, Riemannian metric could be used as a basic representation of manifold-features from the tangent space of our smooth segmented Riemannian manifold within airfoil shape space. In fight conditions input module, the flight conditions of airfoils are normalized. In MTL module, the calculated Riemannian metric $\vec{\mathbf{x}}_{1}$ of airfoils is taken as one input to the function network\_1 and the corresponding flight conditions $\vec{\mathbf{x}}_{2}$ are taken as another input to the function network\_2. The inputs of the context network in MTL module are the vectorized combinations of Riemannian metric of airfoils and the flight conditions (i.e., $[\vec{\mathbf{x}}_{1},\vec{\mathbf{x}}_{2}]$). The output of the MTL module are the predicted aerodynamic performances $\vec{\mathbf{y}}$. In this section, we introduce the details of manifold-based airfoil geometric-feature extraction module and the MTL module.

\subsection{Manifold-based Airfoil Geometric-feature Extraction Module}
\subsubsection{Construction of Smooth Segmented Manifold of Airfoils}

In this section, first, we prove that a self-intersection-free Bézier curve forms a smooth manifold.

\begin{theorem}
\label{theorm_manifold}
A self-intersection-free Bézier curve forms a smooth manifold.
\end{theorem}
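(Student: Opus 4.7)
The plan is to exhibit a smooth atlas on the image $M = B([0,1])$ of the Bézier parameterization
$B(t) = \sum_{i=0}^{n} \binom{n}{i}(1-t)^{n-i} t^{i} P_{i}$,
using $B^{-1}$ itself as (essentially) the only chart. The two ingredients that make this work are: $B$ is a polynomial in $t$, hence $C^{\infty}$ on a neighborhood of $[0,1]$; and the self-intersection-free hypothesis forces $B$ to be injective on $[0,1]$ (or on $[0,1)$ with endpoints identified, in the closed case).

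First I would equip $M$ with the subspace topology inherited from the ambient Euclidean space. Because $[0,1]$ is compact and $B$ is continuous and injective, $B:[0,1]\to M$ is automatically a homeomorphism onto its image, so $\phi := B^{-1}:M\to[0,1]$ is a well-defined global chart. With only one chart the cocycle condition on transition maps is vacuous, and the atlas $\{(M,\phi)\}$ gives $M$ the structure of a smooth $1$-manifold (with boundary at $B(0)$ and $B(1)$). To link this intrinsic structure to the ambient space, which is what the subsequent construction of the Riemannian metric will use, I would write the inclusion $\iota:M\hookrightarrow\mathbb{R}^{d}$ in the chart $\phi$: it reads $\iota\circ\phi^{-1}=B$, which is $C^{\infty}$ by inspection.

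The main obstacle I anticipate is the treatment of parameter values at which $B'(t_{0})=0$, i.e.\ potential cusps. A self-intersection-free curve can still fail to be a smooth embedded submanifold of the ambient space at such points, even though it remains a topological $1$-manifold and carries the intrinsic smooth structure described above. I would resolve this either by adding a mild non-degeneracy assumption on the control polygon ensuring $B'(t)\neq 0$ throughout $[0,1]$, which upgrades $B$ to an immersion and, together with injectivity and compactness, to an embedding; or by making explicit that the manifold structure claimed is the intrinsic one pulled back from $[0,1]$ through $B^{-1}$, for which smoothness requires no condition on $B'$. Choosing between these two readings, and stating it clearly so that the Riemannian metric introduced in the next step is well-defined on the relevant tangent spaces, is the delicate part of the argument; the rest reduces to the observation that polynomials are smooth and a compact-injective-continuous map is a homeomorphism.
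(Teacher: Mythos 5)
Your proposal is correct and follows the same basic strategy as the paper: both arguments take the Bézier parameterization itself as the chart and use the self-intersection-free hypothesis to get injectivity, then invoke the definition of a topological manifold. The differences are in rigor, and they favor your version. The paper argues locally: for each parameter value it picks a neighborhood $U'$, shows $r(D;\cdot)$ is injective there, and immediately declares $r$ a homeomorphism --- it never supplies continuity of the inverse, which your compactness argument (a continuous injection from a compact space into a Hausdorff space is a homeomorphism onto its image) provides cleanly and globally. The paper also concludes that the curve is a \emph{2-dimensional} topological manifold, which is simply a dimension error --- a curve is a 1-manifold (with boundary at $B(0)$ and $B(1)$, a point you note and the paper ignores). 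Most importantly, you flag the cusp issue: at parameters where $B'(t_0)=0$ the image need not be an embedded smooth submanifold of the plane even though it carries the intrinsic smooth structure pulled back from $[0,1]$. The paper never addresses this, yet it matters downstream, since the Riemannian metric the paper defines as $g_{vv}(t)=(\partial_v r(D;t))^2$ degenerates exactly at such points; your suggestion of either assuming $B'\neq 0$ on $[0,1]$ (so that $B$ is an embedding) or explicitly declaring the smooth structure to be the intrinsic one is precisely the clarification the paper needs and does not make.
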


\begin{proof}
\label{proof_of_maanifold}
Given an 2D airfoil coordinate set $D=\{P_{i}=(x_{i},y_{i})|i=1,2,...,M\}$, where $M$ denotes the number of coordinate points. A Bézier curve can be built:

\begin{equation}
\label{equ_bezier}
r(D;t)=\sum_{i=0}^{n} P_{i} B_{i, n}(t), t \in[0,1]
\end{equation}
where $\mathrm{r}(D;t)$ is the function of Bézier curve, $D$ denotes the sample space where the airfoil coordinates located, $t$ is the parameter of this Bézier curve, $n$ is the degree of Bézier curve and $B_{i, n}(t)$ denotes the coefficient which satisfies:

\begin{equation}
\small
\label{equ_bezier_coefficient}
B_{i, n}(t)=C_{n}^{i} t^{i}(1-t)^{n-i}=\frac{n !}{i !(n-i) !} t^{i}(1-t)^{n-i}[i=0,1, \cdots, n]
\end{equation}

A $n$-degree Bézier curve with $n \ge 3$ may have self-intersections (Fig.\ref{fig_bezier_intersection} (a)), which cannot be used to construct a manifold \cite{said2021generalized}. There are two solutions to avoid the self-intersections, see Fig.\ref{fig_bezier_intersection} (b) and (c). In the subgraph (b), the self-intersection $A$ is deleted and the remaining curves construct a manifold. In the subgraph (c), the sequence (or position) of four control points makes the Bézier curve has no self-intersections. A Bézier curve without self-intersections is the prerequisite of Theorem. \ref{theorm_manifold}.

\begin{figure}[h!]
    \centering
    \subfigure[]{
       \includegraphics[scale=0.22]{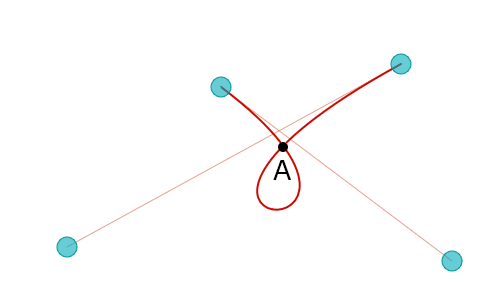}
    }
    \quad
    \subfigure[]{
       \includegraphics[scale=0.22]{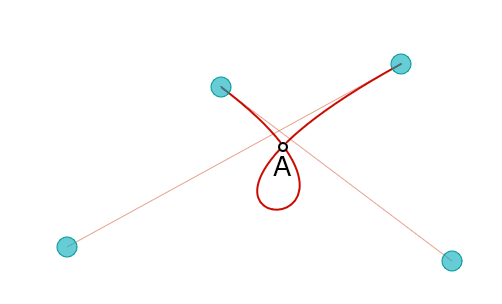}
    }
    \quad
    \subfigure[]{
       \includegraphics[scale=0.22]{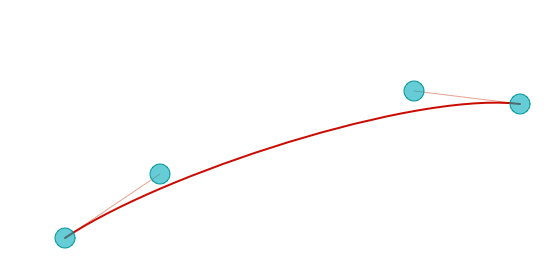}
    }
    \caption{A 3-degree Bézier curve with a self-intersection (a), a 3-degree Bézier curve remored the self-intersection (b) and a 3-degree Bézier curve without self-intersections (c). }
    \label{fig_bezier_intersection}
\end{figure}

We introduce the following lemma \cite{RiemannianTang, RiemannianChen}:
\begin{lemma}\label{lemma_bezier_manifold}
Let $\mathscr{M'}$ be a non-empty Hausdorff space. $\mathscr{M'}$ is called an m-dimensional topological manifold if, for every point $p \in \mathscr{M'}$, there exists an open neighborhood $U'$ of the point $p$ that satisfied $U' \subset \mathscr{M'}$  and a homeomorphic $\phi : U' \to \mathscr{R}^{m}$ from $U'$ to an open set of the $m$ dimensional Euclidean space $\mathscr{R}^{m}$.
\end{lemma}

We assume that $\mathscr{M'^{\star}}$ is the space where a Bézier curve $r(D;t)$ located, and $\mathscr{M'^{\star}} \subset \mathscr{R}$, then ${\exists} \mathcal{T}$, s.t. $(r,\mathcal{T})$ constitutes a topological space (i.e., $\mathscr{M'^{\star}}$ is a non-empty Hausdorff space). This is the prerequisite of Lemma \ref{lemma_bezier_manifold}. To describe the proof easily, we let $\mathscr{M'^{\star}}=\mathscr{M'}$.

Let the Euclidean space where $D=\{P_{i}=(x_{i},y_{i})|i=1,2,...,M\}$ located be $\mathscr{R}^{2}$. Because of the smoothness of $r(D;t)$, for ${\forall} t_{0} \in r(D;t)$, there must exists an open neighborhood $U'$ s.t. $t_{0} \in U'$. For ${\forall}t_{1}, t_{2} \in U'$, and $t_{1} \neq t_{2}$, ${\exists} P_{1} \in \mathscr{R}^{2}$ s.t. $r:t_{1} \to P_{1}$ and ${\exists} P_{2} \in \mathscr{R}^{2}$ s.t. $r:t_{2} \to P_{2}$.

Suppose that $P_{1} = P_{2}$, then $P_{1} = r(D;t_{1})$ and $P_{2} = r(D;t_{2})$, therefore $r(D;t_{1})=r(D;t_{2})$ which contradicts the functional properties of $r(D;t)$ (i.e., $P_{1} \neq P_{2}$).  $r(D;t)$ is a homeomorphic from $U'$ to $\mathscr{R}^{2}$.

According to Lemma. \ref{lemma_bezier_manifold}, there exists a homeomorphic $r(D;t): U' \to \mathscr{R}^{2}$, s.t. the set of all elements in $\mathscr{M'}$ is a 2-dimensional topology manifold.

The proof is finished.
\end{proof}

To approximate simple airfoils, it is sufficient to use simple curve with one Bézier function. However, in order to approximate a complex airfoil, as the number of control points of  Bézier curve increases, the degree of Bézier curve also increases, which could results in a larger error \cite{zhu2014intuitive}. It is necessary to separate the airfoil curve into multiple segments and use a set of Bézier curves instead to approximate a complex airfoil. In this paper, multiple Bézier curve segments are end to end connected, each Bézier curve segment is determined by four control points, see Fig. \ref{fig_segmented_manifold}. In this figure, the control points A, B, C, and D determine segment 1, and the control points D, E, F and G determine segment 2, and so on. Multiple smooth segments are connected end to end to form a segmented smooth curve of airfoils.

\begin{figure}[h!]
    \centering
    \includegraphics[scale=0.4]{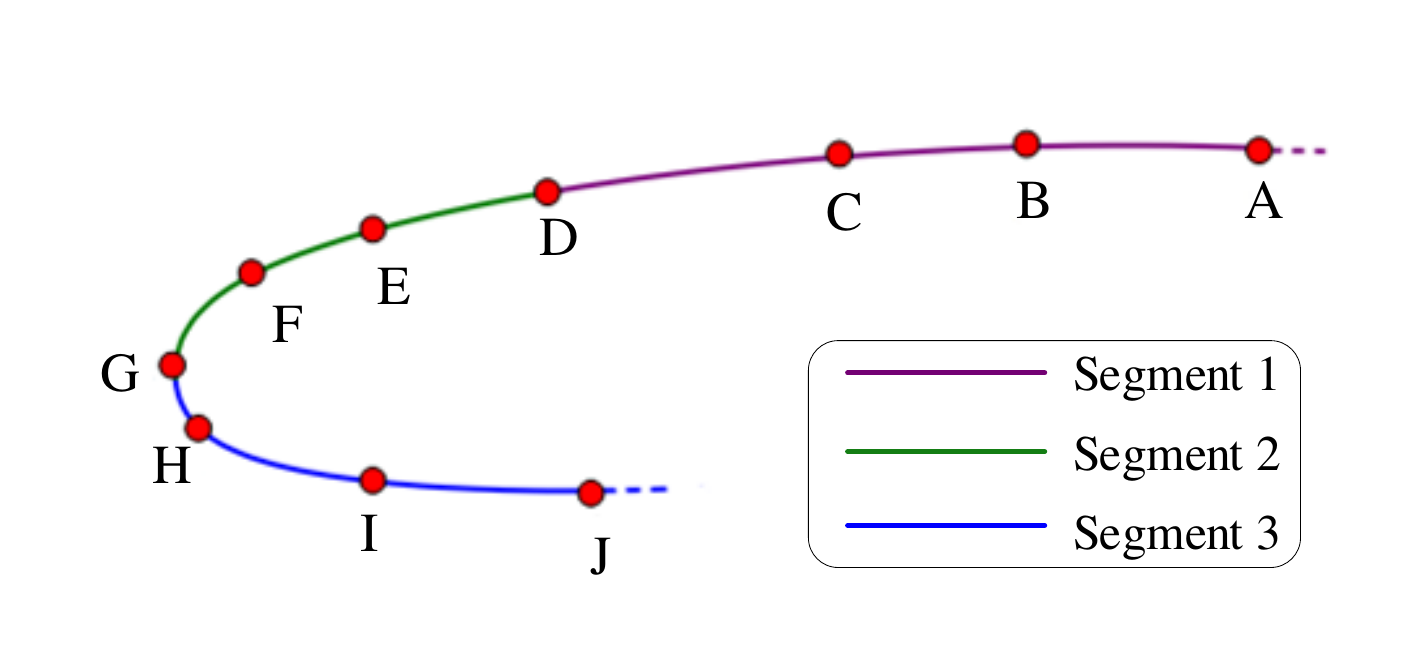}
    \caption{The construction of topology manifold based on 3-degree Bézier curve.}
    \label{fig_segmented_manifold}
\end{figure}

Next, we prove that a set of Bézier curves are connected end to end to form a smooth segmented topology manifold.

\begin{theorem}
\label{theorm_proof_of_segmented_manifold}
Multiple smooth topology manifold constructed by a set of Bézier curves are connected end to end to form a smooth segmented manifold.
\end{theorem}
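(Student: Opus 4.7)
The plan is to reduce Theorem \ref{theorm_proof_of_segmented_manifold} to Theorem \ref{theorm_manifold} at every point except the finitely many junctions, and then handle each junction by constructing a single chart that straddles the two adjoining segments. Let the segmented curve be $\mathscr{M} = \bigcup_{k=1}^{N} r_k(D_k;t)$, where each $r_k$ is a self-intersection-free Bézier segment and consecutive segments $r_k, r_{k+1}$ share exactly one endpoint $Q_k = r_k(D_k;1) = r_{k+1}(D_{k+1};0)$. I would begin by checking that $\mathscr{M}$ inherits the Hausdorff property from $\mathscr{R}^2$, since it is a subset of a Hausdorff space; this is the prerequisite needed to invoke Lemma \ref{lemma_bezier_manifold} at the segmented level.

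Next, I would split the verification of the local-chart condition into two cases. For any interior point $p$ of a segment $r_k$, Theorem \ref{theorm_manifold} already supplies an open neighborhood $U' \subset r_k$ and a homeomorphism $\phi : U' \to \mathscr{R}^2$, so nothing new is required. The real work is at a junction $Q_k$. Here I would choose parameter-intervals $(1-\varepsilon, 1] \subset [0,1]$ for $r_k$ and $[0, \varepsilon) \subset [0,1]$ for $r_{k+1}$ whose images together form an open neighborhood $V \subset \mathscr{M}$ of $Q_k$, and define a piecewise map $\psi : V \to \mathscr{R}^2$ by glueing the homeomorphisms furnished by Theorem \ref{theorm_manifold} on each side, after a linear reparametrization that aligns the two parameter-intervals into a single open interval around $0$. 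I would then verify that $\psi$ is continuous and invertible; the self-intersection-free hypothesis on each Bézier segment together with the fact that $Q_k$ is the only point shared by $r_k$ and $r_{k+1}$ is exactly what is needed to rule out that $\psi$ collapses distinct points.

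The main obstacle will be \emph{smoothness} at the junctions, because topological glueing alone yields only a $C^0$ manifold. To upgrade the construction to a smooth segmented manifold, I would impose that the transition between $r_k$ and $r_{k+1}$ at $Q_k$ is $C^1$, which, for cubic Bézier segments with control polygons $(A_k, B_k, C_k, D_k)$ and $(D_k, E_k, F_k, G_k)$, amounts to the collinearity condition $C_k$, $D_k$, $E_k$ lying on a common line with $E_k - D_k$ proportional to $D_k - C_k$. Under this condition the tangent vectors of $r_k$ and $r_{k+1}$ at $Q_k$ coincide up to scale, so the piecewise chart $\psi$ restricts to a diffeomorphism on overlaps of Theorem \ref{theorm_manifold}'s charts, and the smooth-atlas compatibility condition holds. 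Finally, I would conclude that $\mathscr{M}$, equipped with the charts inherited from Theorem \ref{theorm_manifold} on each segment together with the finitely many junction charts $\psi$ constructed above, is a smooth segmented manifold, completing the proof.
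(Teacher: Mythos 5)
Your proof is essentially sound, but it takes a genuinely different route from the paper's. The paper does not build an atlas at all: it reformulates the theorem as a smooth-extension problem --- given $r_i(D_i;t)$ defined on an open set $U$ of the (already assumed smooth) manifold $\mathscr{M}=\cup_i \mathscr{M'}_i$, it invokes a cut-off function $f$ (Lemma~\ref{lemma_decomposition}, $f|_W\equiv 1$, $f|_{\mathscr{M}\setminus V}\equiv 0$) and extends $r_i$ to all of $\mathscr{M}$ by setting $\tilde{r}_i = r\cdot f$ on $U$ and $0$ elsewhere. That is the standard extension lemma for smooth functions via bump functions; it presupposes the smooth structure on the union rather than constructing it, and it never examines what happens at the points where two segments meet. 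You instead verify the manifold axioms directly where the actual content of the theorem lies: Hausdorffness inherited from $\mathscr{R}^2$, reuse of Theorem~\ref{theorm_manifold}'s charts on segment interiors, and an explicit straddling chart at each junction $Q_k$ obtained by aligning the parameter intervals $(1-\varepsilon,1]$ and $[0,\varepsilon)$. Most importantly, you isolate the geometric condition the paper leaves implicit --- the collinearity of $C_k$, $D_k$, $E_k$ with $E_k-D_k$ positively proportional to $D_k-C_k$ --- without which the glued curve has a corner at $Q_k$ and is only a $C^0$ (topological) manifold as a subset of the plane, not a smooth one. What your approach buys is a proof of the statement as actually claimed, with the necessary hypothesis made explicit; what the paper's approach buys is a tool (globally extending piecewise-defined functions) that is convenient for defining the Riemannian metric segment by segment afterwards, at the cost of being essentially circular as a proof that the end-to-end union is a smooth manifold. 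One minor inherited blemish: like the paper, you describe the charts as maps into $\mathscr{R}^2$, whereas a curve is a $1$-dimensional manifold and the charts should land in an open interval of $\mathscr{R}$; your junction chart is in fact written that way (a single open interval around $0$), which is the correct convention.
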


\begin{proof}
To prove Theorem. \ref{theorm_proof_of_segmented_manifold}, we begin with arbitrary smooth function $r_{i}(D_{i};t)$ defined on $\mathscr{M'}_{i}$, where $i=1,2, \cdots, N$, and $N$ denotes the number of segments. We need to prove that $r_{i}(D_{i};t)$ can be extend to the whole manifold $\mathscr{M}$ which satisfies $\mathscr{M}= \cup_{i=1}^{N} \mathscr{M'}_{i}$. Therefore, Theorem. \ref{theorm_proof_of_segmented_manifold} can be rewritten as the following form:

Let $U$ be an open set in a smooth manifold $\mathscr{M}$, and $r(D;t) \in C^{1}(U)$, for $\forall P_{i} \in U$, there must exists a neighborhood $W$ that satisfies $P_{i} \in W \subset \mathscr{M'} \subset U$ and function $ \tilde{r}_{i}(D;t) \in C^{1}(W)$, s.t. $\tilde{r}_{i}(D;t)|_{W} = r(D;t)|_{W}$.

To prove $\tilde{r}(D;t)|_{W} = r(D;t)|_{W}$, we introduce a lemma of cut-off function \cite{RiemannianTang, RiemannianChen}:
\begin{lemma}
\label{lemma_decomposition}
Let $V', V$ be two open subset in a smooth topology manifold $\mathscr{M}$, and $\bar{V'}$ is a compact subset that satisfied $\bar{V'} \subset V$. Then $ {\exists} f \in C^{\infty}(M)$, s.t. $0 \leq f \leq 1$ and $f|_{V'} \equiv 1, f|_{M \setminus V} \equiv 0$.
\end{lemma}

Lemma.\ref{lemma_decomposition} means that arbitrary function defined on $\mathscr{M}$ can be smoothly truncated by multiplying with $f$:

\begin{equation}
\label{equ_cutoff}
\begin{cases}
  \left.(f \cdot r(D ; t))\right|_{V'} \equiv r(D ; t)_{V'}\\
  \left.(f \cdot r(D ; t))\right|_{\mathscr{M'} \setminus V} \equiv 0
  \end{cases}
\end{equation}

For arbitrary $P_{i} \in U$, take two open neighborhood $W$ and $V$, s.t. $\bar{W}$ and $\bar{V}$ are compact and $\bar{W} \subset V \subset \bar{V} \subset U$. According Lemma. \ref{lemma_decomposition} and (\ref{equ_cutoff}), $ {\exists} f \in C^{1}(\mathscr{M})$, s.t. $f|_{W} \equiv 1, f|_{\mathscr{M} \setminus V} \equiv 0$.

For ${\forall} P_{j}=r(D;t_{j}) \in \mathscr{M}$, we let

\begin{equation}
\label{equ_r}
\tilde{r}_{i}(D_{i};t_{j})=
\begin{cases}
r(D;t_{j})f(P_{j}), & P_{j} \in U \\
0, & P_{j} \notin U
\end{cases}
\end{equation}

Because $r(\cdot)f(\cdot)$ is smooth in subset $U$ and $r(D;t_{j})f(P_{j}) \equiv 0$ in $U \cap (\mathscr{M} \setminus \bar{V})$. According to (\ref{equ_r}), the function $\tilde{r}_{i}(D_{i};t_{j})$ is smooth in subset $U$ and satisfies $ \tilde{r}_{i}(D_{i};t_{j}) \equiv 0$ in $U \cap (\mathscr{M} \setminus \bar{V})$. Because $\mathscr{M} = U \cup (\mathscr{M} \setminus \bar{V})$, then $\tilde{r}_{i}(D_{i};t)$ is a function defined in $\mathscr{M}$.

Theorem. \ref{theorm_manifold} means that a Bézier curve forms a sooth manifold. Theorem. \ref{theorm_proof_of_segmented_manifold} explain that a set of Bézier curves are connected end to end to form a smooth segmented manifold. Combining these two theorems together, we learn that it is feasible to separate a modern complex airfoil curve into segments and a set of low-degree Bézier curves can be applied to approximate the segments.

The proof is finished.

\end{proof}

\subsubsection{Calculation of Riemannian Metric}
Those manifolds from which a set of Riemannian metrics can be calculated are called Riemannian manifolds \cite{RiemannianTang, sommer2020horizontal, hansen2021diffusion}. The construction of a Riemannian manifold build a bridge between 2D airfoil Euclidean space $ \mathscr{R}^{2}=\{x_{i} \in \mathscr{R},y_{i} \in \mathscr{R} \}$ and Riemannian space $\mathscr{M}$ with parameter $t$. The Riemannian metric at arbitrary point $t$ can be calculated by \cite{alexakis2020determining}:
\begin{equation}
\label{equ_geometric_feature}
g_{vw}(t)=\partial_{v} r(D;t) \partial_{w} r(D;t)
\end{equation}
where $g_{vw}(t)$ is the Riemannian metric of an airfoil at point $t$, $\partial_{v}=\frac{\partial}{\partial t^{v}}$  denotes directions of partial derivative. Considering that $r(D;t)$ is an 1D manifold, therefore, $v=w$:

\begin{equation}
\label{equ_geometric_feature_i_equals_j}
g_{vw}(t) = g_{vv}(t) =\left(\partial_{v} r(D;t)\right)^{2} = g_{ww}(t)
\end{equation}

We see from (\ref{equ_geometric_feature}) that Riemannian metric $g_{vw}(t)$ is the result of the inner-product of two arbitrary vectors in the tangent space $\{\frac{\partial}{\partial t^{v}}\}$ of $r(D;t)$, the two vectors that make up this inner-product can be extended the entire tangent space of the built Riemannian manifold. In other words, the Riemannian metric represents a sort of geometric-base from tangent space of airfoils, through which many manifold-features of the Riemannian manifold can be further measured and calculated. As a result, $g_{vw}(t)$ could be chosen as a manifold-feature that represents the geometrical characteristic of the airfoil curves.

\subsection{Multi-Task Learning (MTL) Module}

In this section, we introduce the MTL module from two aspects: the structure and the training method.

\subsubsection{The Sturcture}

As shown in Fig.\ref{fig_structure}, the MTL module consists of two function networks and a context network. One function network learns one of the discrepant data (i.e., the function network\_1 learns the Riemannian metric $\vec{\mathbf{x}}_{1}$ and the function network\_2 learns the flight conditions $\vec{\mathbf{x}}_{2}$). And the context network learns the strategy (i.e., the fusion weights) to fuse these two discrepant data (i.e., $[\vec{\mathbf{x}}_{1},\vec{\mathbf{x}}_{2}]$).

Riemannian metric $\vec{\mathbf{x}}_{1}$ and flight conditions $\vec{\mathbf{x}}_{2}$ can be written as:
\begin{equation}\label{equal_x1_x2}
  \begin{cases}
  \vec{\mathbf{x}}_{1}=g_{vw}(t)|_{t\in [0,1]}\\
  \vec{\mathbf{x}}_{2}=[Ma, \alpha, \Phi, \cdots]
  \end{cases}
\end{equation}
where, $Ma$ denotes the incoming Mach number, $\alpha$ denotes the angle of attack and $\Phi$ denotes the roll Angle of flow.

The outputs of the MTL are the aerodynamic performances $\vec{\mathbf{y}}$:
\begin{equation}
\label{equal_y}
\small
\begin{aligned}
\vec{\mathbf{y}}=&\{y_{i}|i=1,2,\cdots,K\}\\
 =&\sum_{m=1}^{K} f_{1 m}(\vec{\mathbf{x}}_{1}) * c_{m}([\vec{\mathbf{x}}_{1},\vec{\mathbf{x}}_{2}])+\sum_{n=1}^{K} f_{2 n}(\vec{\mathbf{x}}_{2}) * c_{K+n}([\vec{\mathbf{x}}_{1},\vec{\mathbf{x}}_{2}])
\end{aligned}
\end{equation}
where $\vec{\mathbf{y}}$ denotes the output vector, $f_{1m}(\vec{\mathbf{x}}_{1})$ denotes the $m$th component of output vector from function network\_1, $f_{2 n}(\vec{\mathbf{x}}_{2})$ denotes the $n$th component of output vector from function network\_2, $K$ denotes the number of output nodes in function networks, and $c_{m}([\vec{\mathbf{x}}_{1},\vec{\mathbf{x}}_{2}])$ denotes the $m$th component of output vector from context network, and $[\vec{\mathbf{x}}_{1},\vec{\mathbf{x}}_{2}]$ denotes the concatenation of $\vec{\mathbf{x}}_{1}$ and $\vec{\mathbf{x}}_{2}$.

\begin{table*}[h!]
\caption{The numbers of inputs in UIUC dataset after pre-processing.}
\label{tab_format_dataset}
\centering
\begin{tabular}{ccccc}
\hline
data type& geometrical parameters  & manifold-features  & coordinates of airfoils & flight conditions \\ \hline
number of variables& 7 & 271                   & $281 \times 2$          & 2                           \\ \hline
\end{tabular}
\end{table*}

\subsubsection{The Training Method}
In the MTL, the function network and the context network are trained alternately. The details of training method are as follows.

Step. 1: forward propagation. The loss function of MTL is:

\begin{equation}
\label{equ_function_loss}
E=\frac{1}{N} \sum_{z=1}^{N}\left(\frac{1}{K} \sum_{i=1}^{K}\left(y_{z i}-\hat{y}_{z i}\right)^{2}\right)
\end{equation}
where $N$ denotes the number of data in the dataset, $y_{zi}$ denotes the predicted value, and $\hat{y}_{zi}$ denotes the real value in the training set.

Step. 2: function networks updates.
The parameters of two function networks are updated by:
\begin{equation}
\label{equation_update_function_networks}
  \boldsymbol{\theta}_{f}=\boldsymbol{\theta}_{f}+\eta \frac{\partial E}{\partial \boldsymbol{\theta}_{f}}
\end{equation}
where, $\eta$ denotes the learning rate of MTL, $\boldsymbol{\theta}_{f}$ denotes the parameters of the function networks.

Step.3: context network updates.
The parameters of the context network are updated by:
\begin{equation}
\label{equation_update_context_networks}
  \boldsymbol{\theta}_{c}=\boldsymbol{\theta}_{c}+\eta \frac{\partial E}{\partial \boldsymbol{\theta}_{c}}
\end{equation}
where, $\boldsymbol{\theta}_{c}$ denotes the parameters of the context network.

Step. 4: repeat from Step.1 to Step.3, until the MTL converges.

\section{Experimental Results and Ayalysis}
\label{section_experimental_result}
To validate the feasibility of our proposed method, two categories of experiments based on UIUC \footnote{http:\/\/m-selig.ae.illinois.edu\/ads\/coord\_database.html} airfoil dataset were conducted. Experiments I was designed to compare the results of airfoils geometric-feature extractions with various methods. Experiments II was proposed to compare the prediction errors of aerodynamic performances when discrepant data are fused by different methods.

\begin{figure}[h!]
    \centering
    \includegraphics[scale=0.25]{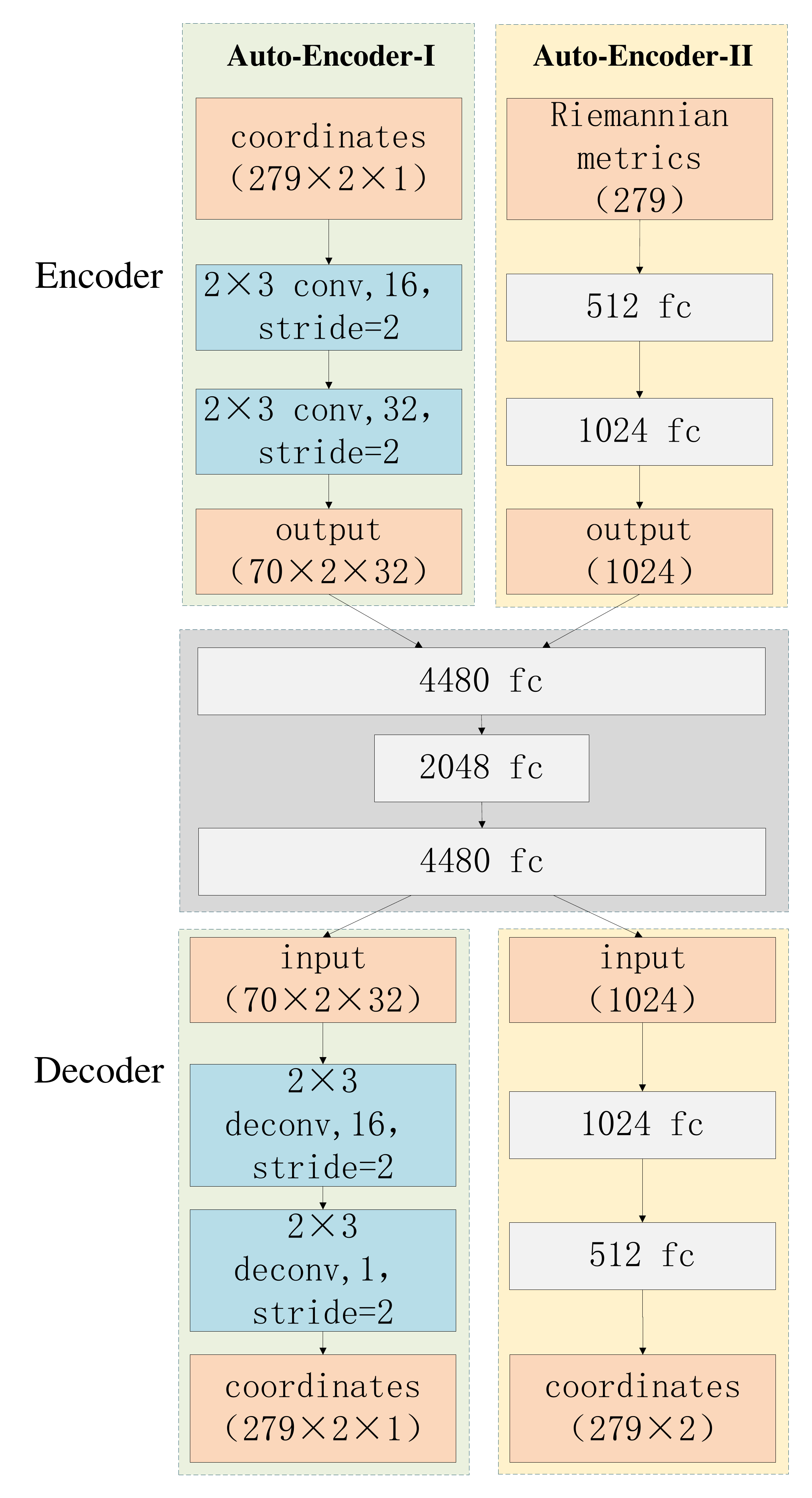}
    \caption{The structure of Auto-Encoder-I and Auto-Encoder-II. In Auto-Encoder-I, the first convolutional layer consists of sixteen 2 × 3 filters, the stride of these filters are 2 and the padding of convolution layer is `SAME'. The remaining convolution layers are similar.}
    \label{fig_autoencoder_structure}
\end{figure}

\subsection{Dataset and Pre-Processing}

UIUC dataset provides more than 1500 real airfoils, each of which is discretized by 2D coordinates. The pre-processing of UIUC airfoils data were as follows: Firstly, we removed airfoils with only upper or lower surface to guarantee that the remaining airfoils in the dataset are complete. Secondly, we sorted the remaining airfoil coordinates in the order of the trailing edge, the upper surface, the leading edge, and the lower surface to avoid the generation of self-intersections. Finally, we smoothed the airfoil coordinates through a set of 3-degree Bézier curves, and then resampled 281 coordinates for each airfoil uniformly.

Since the geometrical parameters, coordinates and manifold-features of airfoils as various input are compared in experiments, we describe the calculation of them in this paragraph. For the calculation of geometrical parameters of airfoils, we chose seven different variables as the geometrical parameters, they were chord, maximum camber, the position of maximum camber, maximum thickness, the position of maximum thickness, leading edge radius and trailing edge thickness. We used Profili to calculate the above geometrical parameters \cite{li2021analysis}. For the manifold-features of airfoils, we calculated the Riemannian metrics of airfoils at different point $t$ according to (\ref{equ_geometric_feature}). The flight conditions (i.e., $Ma$ and $\alpha$) with performances (i.e., $C_{L}$ and $C_{D}$) of the corresponding airfoils could be found on the webfoil platform \cite{du2020b}. In summary, the numbers of inputs in the UIUC dataset after pre-processing is summarized in Tab. \ref{tab_format_dataset}.

\subsection{Experiments I: Geometric-Features Comparation Experiments}

In the field of feature extractions of airfoils, the Auto-Encoders are commonly used to re-built airfoils because of their ability to extract features from geometrical shape of airfoils and represent the airfoils as latent feature vectors \cite{yonekura2022generating, yonekura2021data}. In addition, the Auto-Encoders reflect the accuracy of extracted features by calculating the similarities between re-built airfoils and real airfoils \cite{ li2022physically, morimoto2021convolutional}. Therefore, the Auto-Encoders are compared to re-built approximate airfoils with coordinates and manifold-features taking as inputs, respectively. We designed two Auto-Encoders (see Fig. \ref{fig_autoencoder_structure}). Auto-Encoder-I \cite{morimoto2021convolutional} takes airfoil coordinates as inputs and Auto-Encoder-II takes Riemannian metrics as inputs. The outputs of both the Auto-Encoder-I and Auto-Encoder-II are airfoils coordinates.

\begin{figure}[h!]
    \centering
    \subfigure[]{
        \includegraphics[scale=0.32]{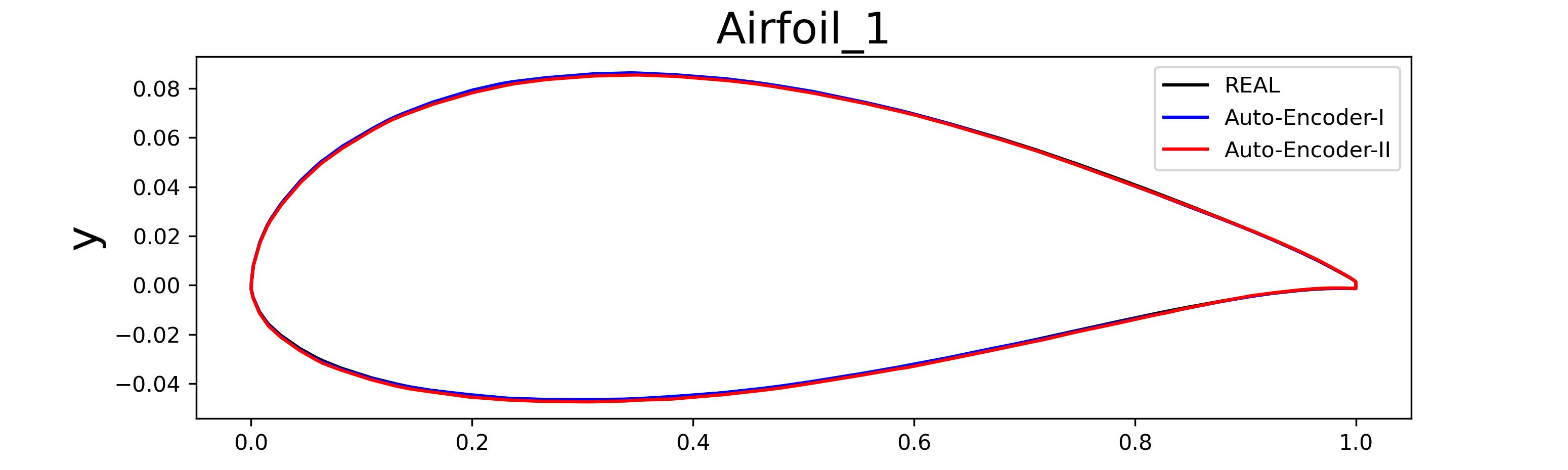}
    }
    \quad
    \subfigure[]{
        \includegraphics[scale=0.32]{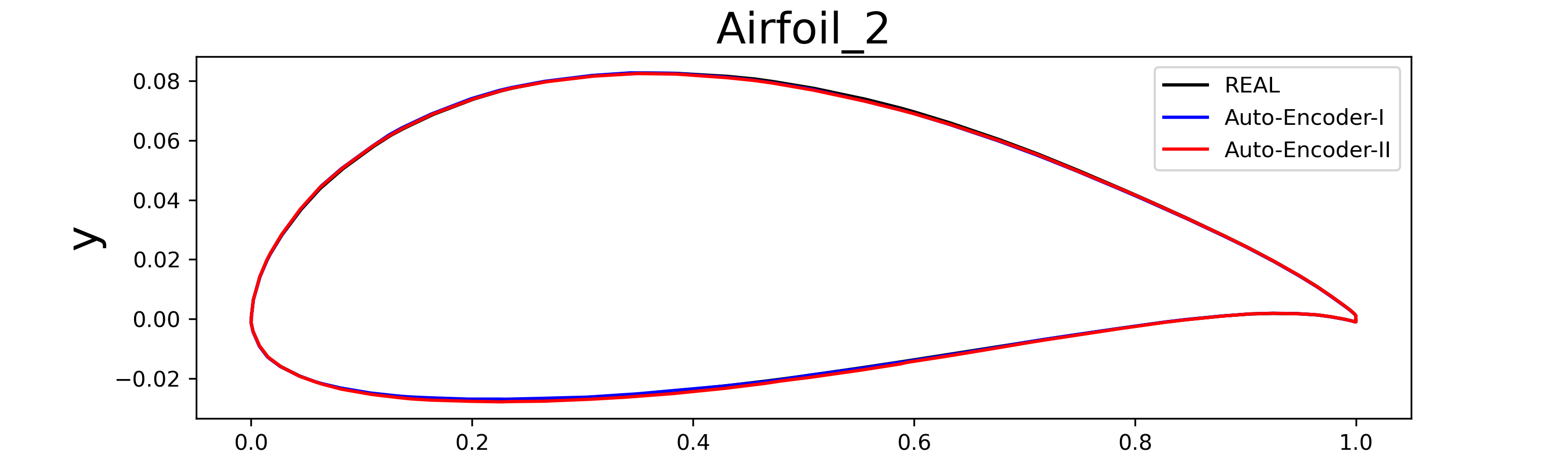}
    }
    \\
    \subfigure[]{
        \includegraphics[scale=0.32]{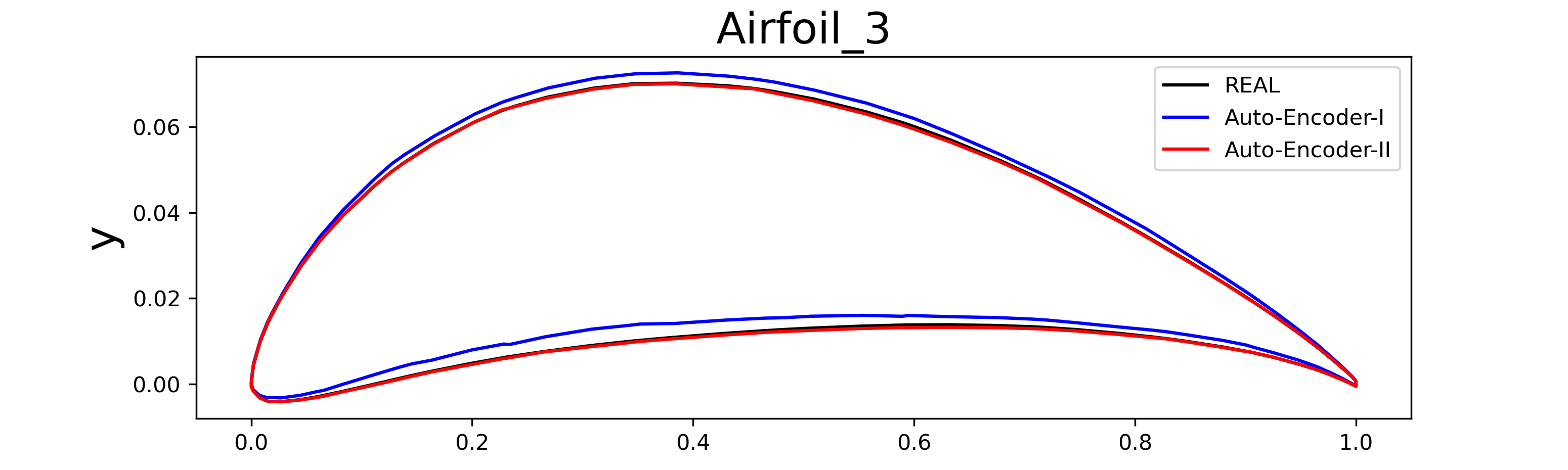}
    }
    \quad
    \subfigure[]{
        \includegraphics[scale=0.32]{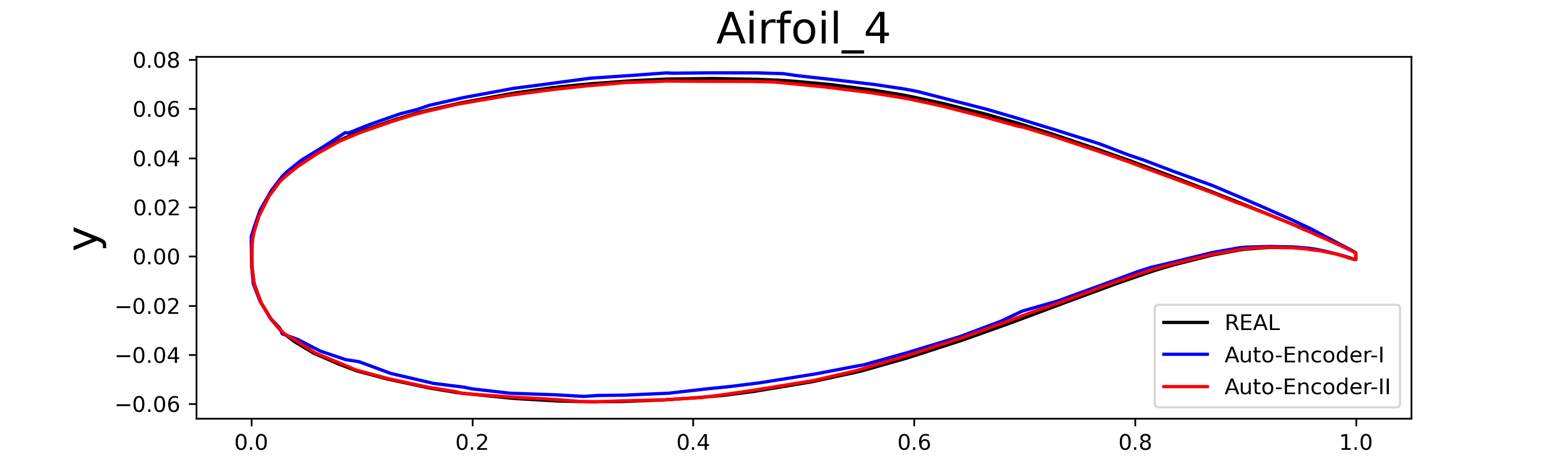}
    }
    \\
    \subfigure[]{
        \includegraphics[scale=0.32]{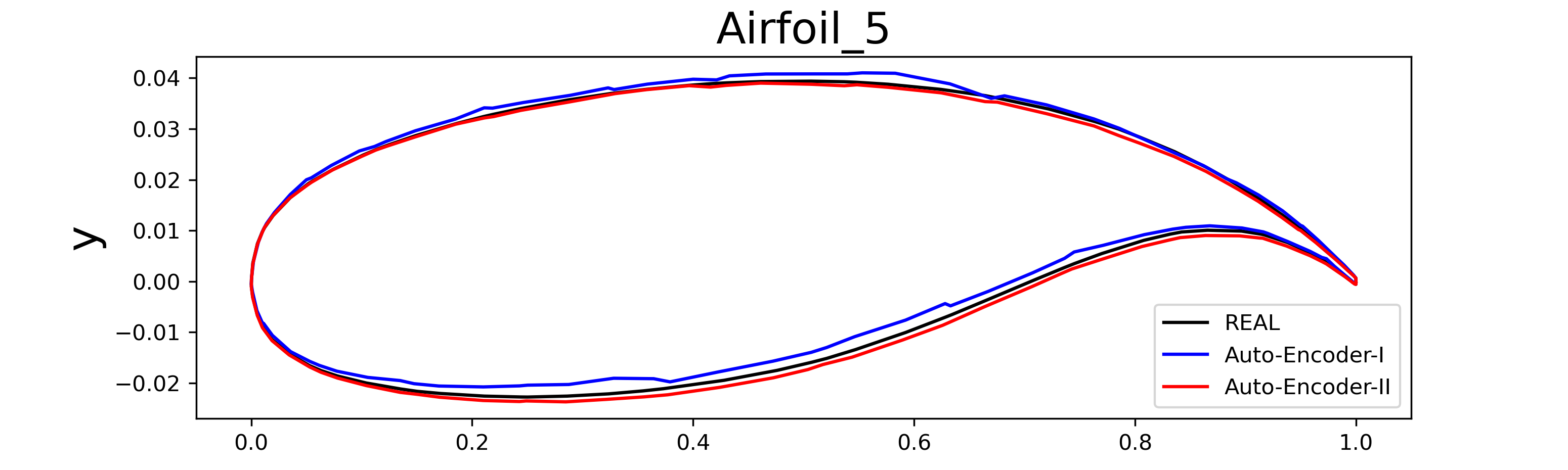}
    }
    \caption{The airfoils re-built by the Auto-Encoder-I \cite{morimoto2021convolutional} and Auto-Encoder-II.}
    \label{fig_airfoil_results}
\end{figure}

\begin{table*}[h!]
    \centering
    \caption{Hyper-parameters of deep learning models in this paper.}
    \label{tab_model_parameter}
    \begin{tabular}{ccc}
        \hline
            parameter type                              & parameter name                      &value                    \tabularnewline
        \hline
            hardware platform                           & GPU                                 &four Tesla K80 GPUs           \tabularnewline
        \hline
            \multirow{2}{*}{software platform}          & programming language                &Python 3.9.0            \tabularnewline
                                                        & deep learning framework             &pytorch 1.11.0         \tabularnewline
        \hline
            \multirow{9}{*}{hyper parameters of models}&optimization algorithm                &Adam(Beta1=0.9,Beta2=0.999) \tabularnewline
                                                        &Data normalization                   &max-min normalization    \tabularnewline
                                                        &active function                      &Relu \tabularnewline
                                                        &batch normalization                  &Yes                      \tabularnewline
                                                        &batch size                           &128                      \tabularnewline
                                                        &learning rate                        &0.001                   \tabularnewline
                                                        &epoch                                &2000                     \tabularnewline
                                                        &validation                           &10-fold cross validation  \tabularnewline
                                                        &traing set: validation set: test set &8: 1: 1  \tabularnewline
        \hline
    \end{tabular}

\end{table*}

\subsubsection{Model Settings}
The hyper-parameters in this experiments are shown in Tab.\ref{tab_model_parameter}. All the deep learning models compared in this paper are implemented based on the Pytorch framework, and they were trained 2000 epochs on four Tesla K80 GPUs \cite{zhou2022mmrotate}. In order to conduct 10-fold cross validation \cite{barnhart2021blown}, the whole dataset is randomly divided into 10 subsets. Of the 10 subsets, one single subset is selected for the testing set, and the remaining 9 subsets are selected for the training process. Among the 9 subsets, we randomly select 1 subset for the validation set, and the remaining 8 subsets are used for the training set, i.e. the training set, validation set and testing set are randomly divided at a ratio of 8:1:1. There is no intersection among all subsets, besides, every subset has the same probabilities to be chosen as a testing set. The statistical errors in this paper are the average errors of 10 folds.

The mean square error (MSE) was chosen as the loss function of the above Auto-Encoders. The loss function of Auto-Encoder-I is:
\begin{equation}
\label{equ_mse_auto_encoder_i}
\mathscr{L}_{AE-I}=\frac{1}{N} \sum_{z=1}^{N} \left( AE_{1} \left(\mathbf{P}_{z}\right)-\mathbf{P}_{z}\right)^{2}
\end{equation}
where $AE_{1}(\cdot)$ denotes the Auto-Encoder-I, $\mathbf{P}_{z} = \cup_{i=1}^M P_{i}$, a matrix with shape $M \times 2$, denotes the $z$th airfoil coordinates.

The loss function of Auto-Encoder-II is:

\begin{equation}
\label{equ_loss_function}
\mathscr{L}_{AE-II}=\frac{1}{N} \sum_{z=1}^{N} \left( AE_{2}\left(\vec{\mathrm{g}_{z}}\right)-\mathbf{P}_{z}\right)^{2}
\end{equation}
where $AE_{2}(\cdot)$ denotes the Auto-Encoder-II, and $\vec{\mathrm{g}_{z}} = \{\vec{g}^{z}_{vw}(t)|_{t \in [0,1]}\}$ denotes the vector of manifold metrics of the $z$th airfoil.

\subsubsection{Results and Analyses}

\begin{table}[h!]
\centering
\caption{The test MSE of typical five airfoils re-built by Auto-Encoder-I and Auto-Encoder-II.}
\label{tab_exp_1_errors}
  \begin{tabular}{cccc}
    \hline
    airfoils      & Auto-Encoder-I \cite{morimoto2021convolutional} & Auto-Encoder-II                & $\eta$\\ \hline
    airfoil\_1    & $5.97 \times 10^{-4}$                           & $\mathbf{3.90 \times 10^{-4}}$ &34.67\%\\
    airfoil\_2    & $6.56 \times 10^{-4}$                           & $\mathbf{2.22 \times 10^{-4}}$ &66.16\%\\
    airfoil\_3    & $1.04 \times 10^{-3}$                           & $\mathbf{1.78 \times 10^{-4}}$ &82.88\%\\
    airfoil\_4    & $1.82 \times 10^{-3}$                           & $\mathbf{1.14 \times 10^{-3}}$ &37.36\%\\
    airfoil\_5    & $1.34 \times 10^{-3}$                           & $\mathbf{7.07 \times 10^{-4}}$ &47.24\%\\
    average       & ——                                              & ——                             &53.66\% \\ \hline
  \end{tabular}
\end{table}

\begin{table*}[h!]
\caption{The details of methods compared in Experiment-II.}
\label{tab_structure_MTL}
\centering
\begin{tabular}{cccccc}
\hline
method &$\vec{x}_{1}$               & $\vec{x}_2$       & structure of network\_1 ($f_{1}$)& structure of network\_2 ($f_{2}$) & structure of $c$ \\ \hline

MTL\_g&geometrical parameters (1*7)& \tabincell{c}{flight conditions\\ ($Ma$ and $\alpha$)}&FCN: 32{*}3&FCN: 8{*}3&FCN: 32{*}3\\

MTL\_c &coordinates (281*2)& \tabincell{c}{flight conditions\\ ($Ma$ and $\alpha$)}& \tabincell{c}{conv\_1:kernel=2{*}2{*}16, stride=2\\
conv\_2:kernel=2{*}2{*}32, stride=2\\
conv\_3:kernel=2{*}2{*}64, stride=2\\
max pooling: kernel=2{*}2{*}64, stride=2} & FCN: 16{*}3& FCN: 512{*}3\\

RBF-GAN &coordinates (281*2)& \tabincell{c}{flight conditions\\ ($Ma$ and $\alpha$)}&generator FCN: 1024{*}3  &discriminator RBF: 512{*}1&——\\

PGML&coordinates (281*2)& \tabincell{c}{flight conditions\\ ($Ma$ and $\alpha$)}& FCN: 1024{*}3&FCN: 1024{*}1&——\\

MLP&coordinates (281*2)& \tabincell{c}{flight conditions\\ ($Ma$ and $\alpha$)}&FCN: 1024{*}3&——&——\\

MDF (our method) &manifold-features (1*271)& \tabincell{c}{flight conditions\\ ($Ma$ and $\alpha$)} & FCN: 1024{*}3        & FCN: 16{*}3          & FCN: 512{*}3     \\\hline
\end{tabular}
\end{table*}

The test MSEs of two Auto-Encoders on five typical airfoils are shown in Tab. \ref{tab_exp_1_errors}. The MSE reduction $\eta$ of Auto-Encoder-II is calculated by
\begin{equation}
\nonumber
\label{equ_eta}
\eta = \frac{\|\sigma_{I}-\sigma_{II}\|}{\|\sigma_{I}\|} \times 100 \%
\end{equation}
where $\sigma_{I}$ denotes the test MSE of Auto-Encoder-I, and $\sigma_{II}$ denotes the test MSE of Auto-Encoder-II. From Tab. \ref{tab_exp_1_errors}, we see that the MSEs of Auto-Encoder-II are always smaller than those of Auto-Encoder-I. Besides, the average MSE of Auto-Encoder-II reduce by 53.66\% compared with Auto-Encoder-I. This table illustrates that the airfoils re-built by Auto-Encoder-II are more accurate and realistic than those re-built by Auto-Encoder-I. Fig. \ref{fig_airfoil_results} depicts the above five airfoils re-built by the two Auto-Encoders. Although we used a 10-degree Bézier curve to smooth all re-built airfoils, all the airfoils re-built by Auto-Encoder-I are not smooth, which means that they cannot be applied in the field of airfoil design. On the contrary, the airfoils re-built by Auto-Encoder-II are closer to the real airfoils and more smooth than those re-built by Auto-Encoder-I. Experiments I demonstrates that Riemannian metrics, as a sort of manifold-feature, can better reflect geometrical nature of airfoils than coordinates.

\subsection{Experiments II: Aerodynamic Performance Prediction Experiments}

In this experiment, we compared multiple methods in terms of discrepant data fusion and aerodynamic performances predictions. The methods are MTL\_g, MTL\_c \cite{hu2022aerodynamic}, RBF-GAN \cite{hu2022flow}, PGML \cite{pawar2021physics}, MLP \cite{xin2022surrogate} and MDF.

\subsubsection{Model Settings}

\begin{figure*}[h!]
    \centering
    \subfigure[]{
       \includegraphics[scale=0.4]{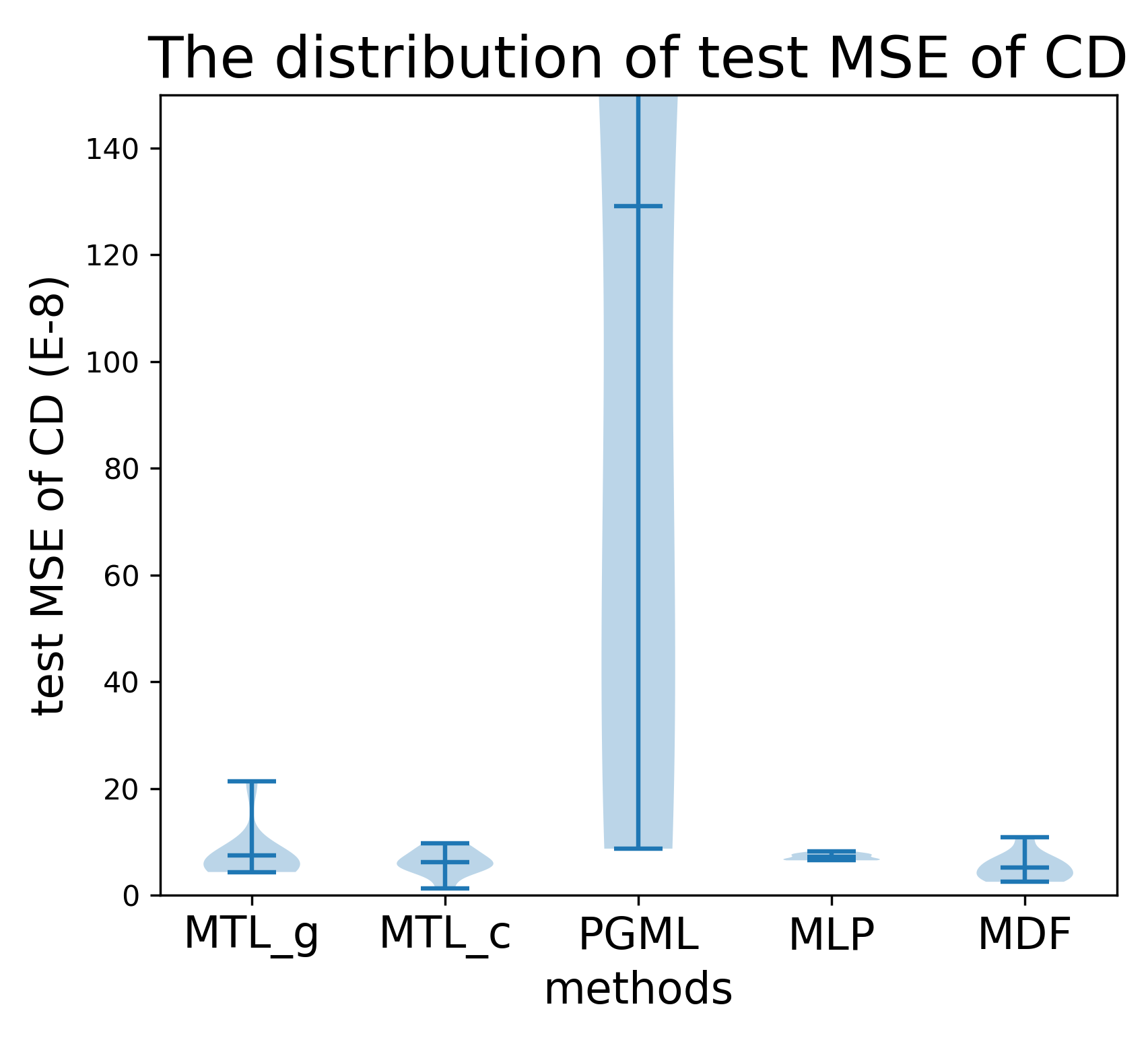}
    }
    \quad
    \subfigure[]{
       \includegraphics[scale=0.4]{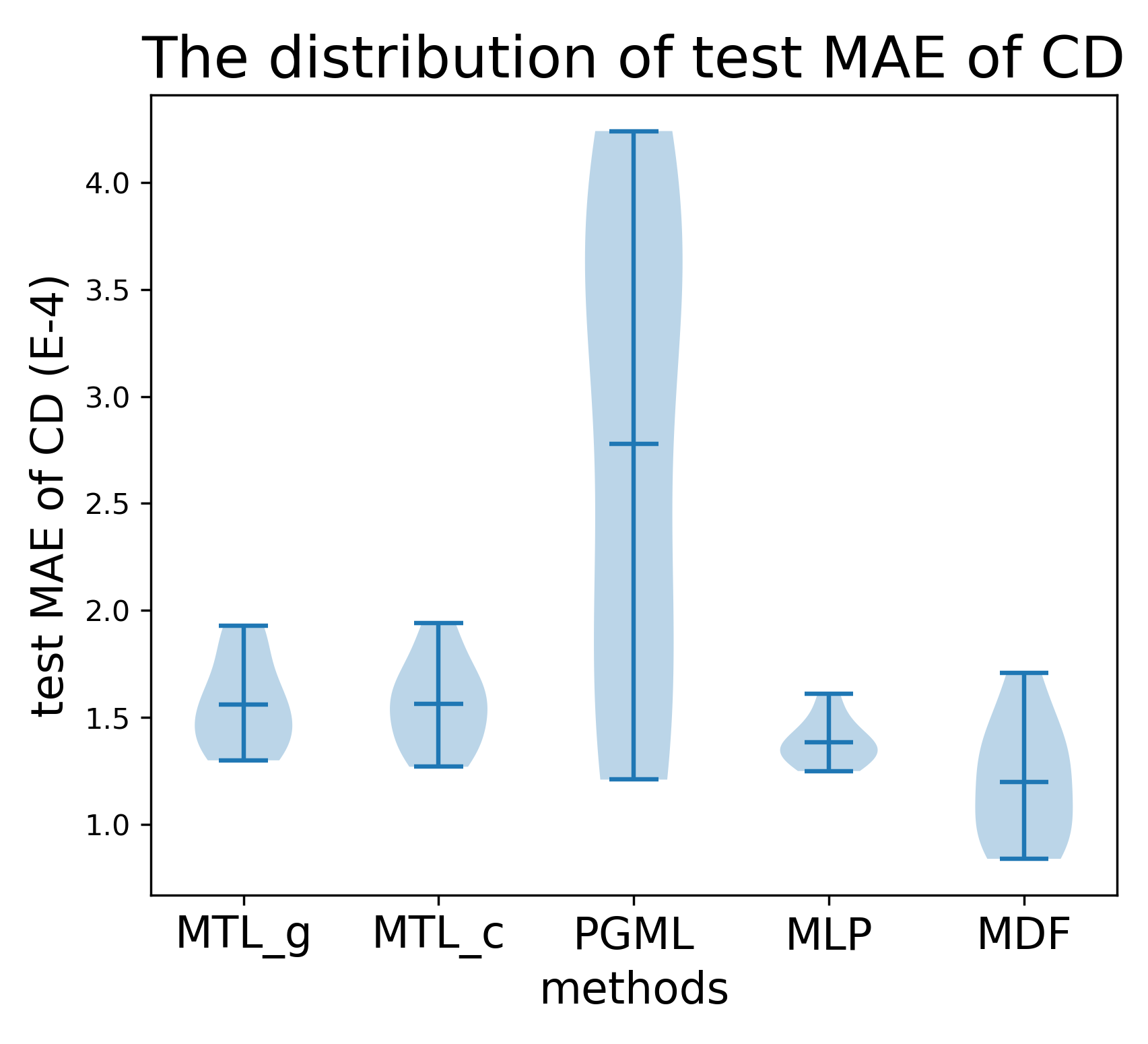}
    }
    \\
    \subfigure[]{
       \includegraphics[scale=0.4]{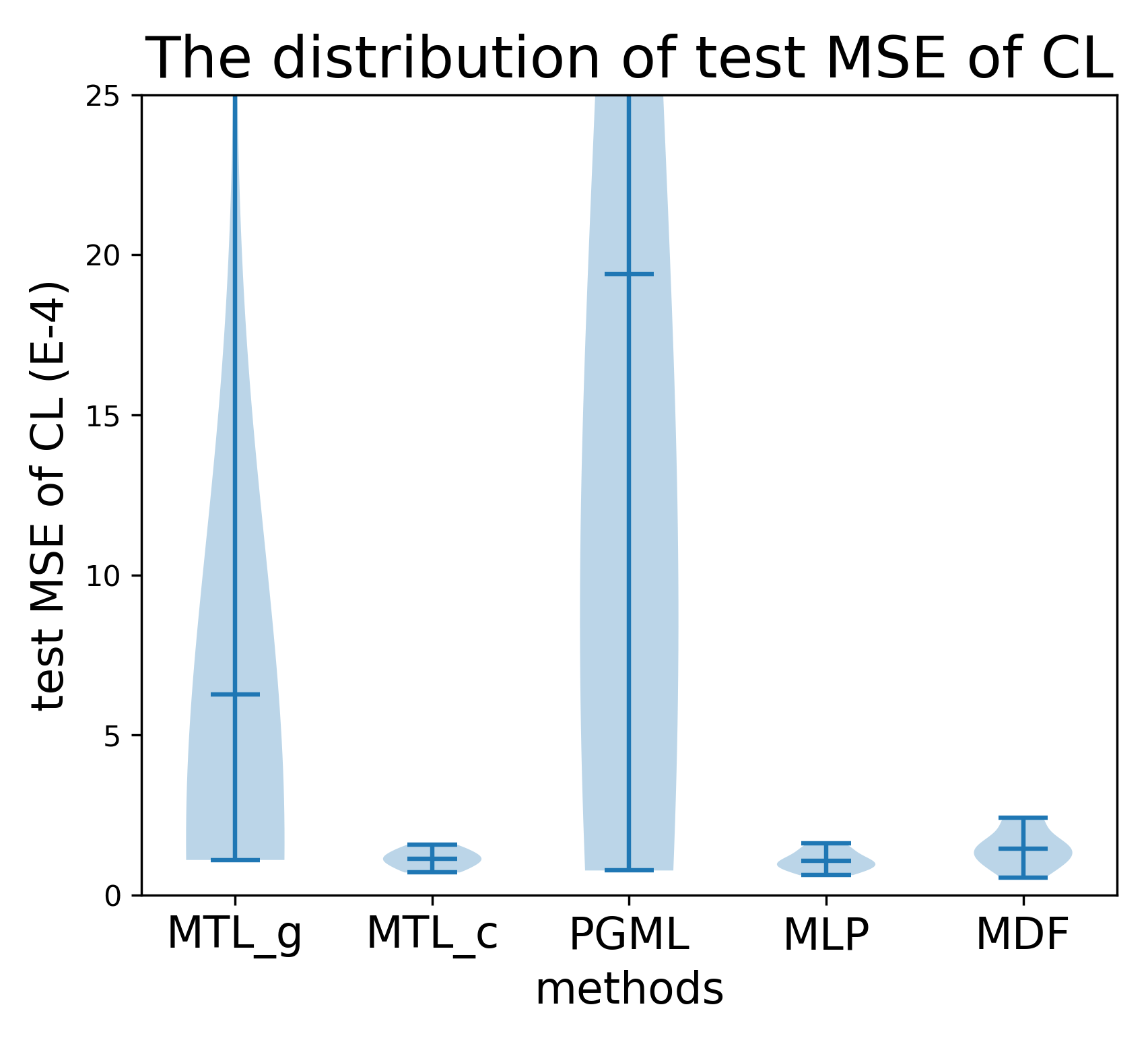}
    }
    \quad
    \subfigure[]{
       \includegraphics[scale=0.4]{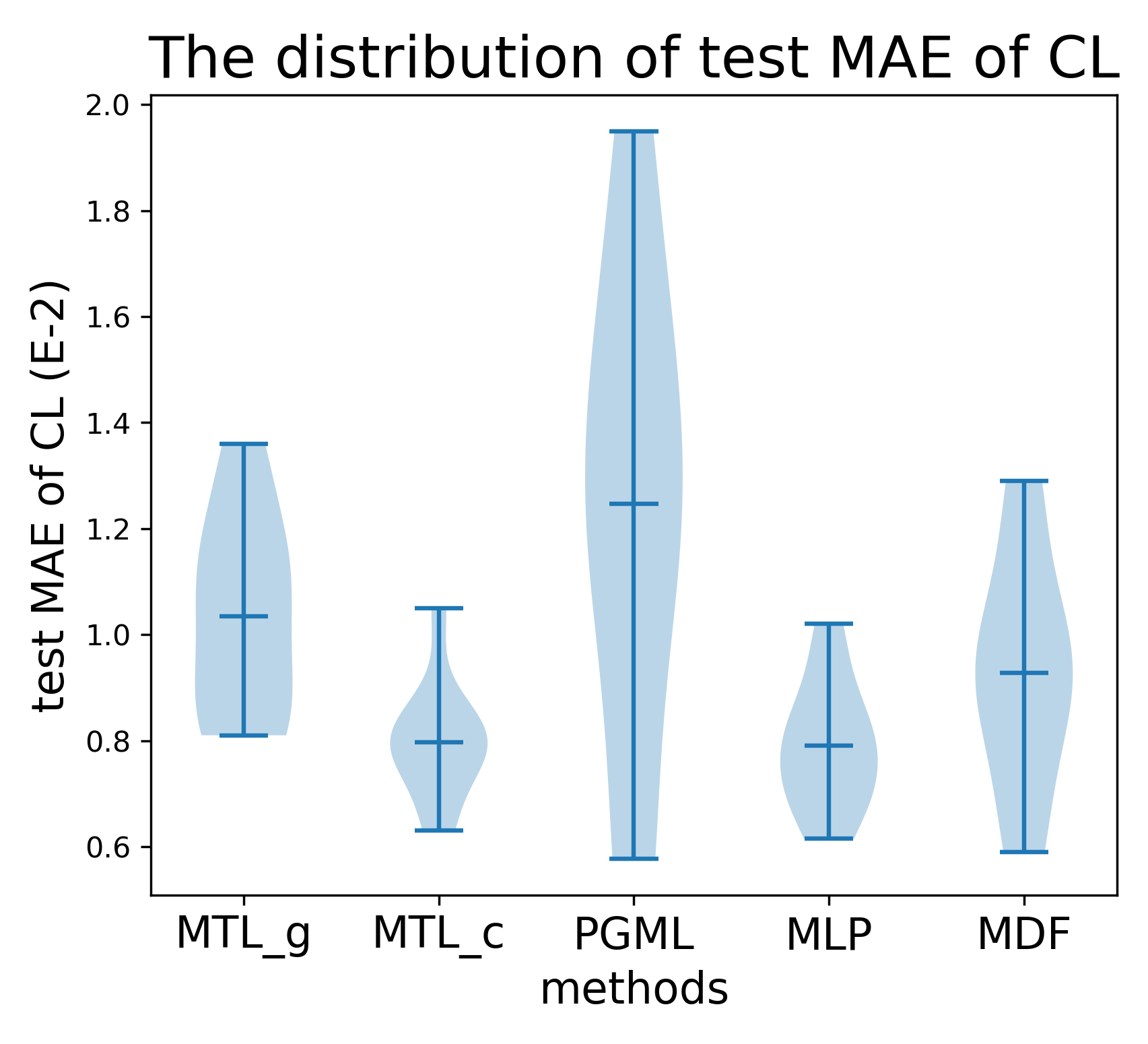}
    }
    \caption{The distributions of MSEs and MAEs of $C_{D}$ and $C_{L}$ predicted by MTL\_g, MTL\_c, PGML, MLP and MDF in 10 round experiments. }
    \label{fig_loss_distribution}
\end{figure*}

All methods in this experiment were repeated 10 rounds, each of which were verified by 10-fold cross validation (i.e., each method is repeated 100 times). The details of hyper-parameters in experiments II are the same as those of experiments I, see Tab.\ref{tab_model_parameter}.

The optimal structure of the above methods are shown in Tab. \ref{tab_structure_MTL}. MTL\_g takes geometrical parameters and flight conditions (Ma and $\alpha$) as inputs. Considering that the geometrical parameters constitute a vector with shape $1 \times 7$, the structure of function network\_1 in MTL\_g is set to ``FCN: 32{*}3'', which indicates that the FCN-based function network\_1 has 3 layers, each of which has 32 neurons. The structure of function network\_2 in MTL\_g is set to ``FCN: 8{*}3'', and the structure of context network in MTL\_g is set to ``FCN: 32{*}3''.

MTL\_c takes coordinates and flight conditions as inputs. Because the coordinates of an airfoil are organized as a matrix with shape $281 \times 2$, the function network\_1 in MTL\_c is designed as a CNN. The CNN-based function network\_1 in MTL\_c consists of three layers of convolutions and one layer of max pooling. The convolution kernels of the three convolution layers have dimensions $2 \times 2 \times 16$, $2 \times 2 \times 32$ and $2 \times 2 \times 64$, respectively. The kernel shape of max pooling layer is set to $2 \times 2 \times 64$. The value of strides of all kernels is 2. Besides, the structure of function network\_2 in MTL\_c is set to ``FCN: 16{*}3'', and the structure of context network in MTL\_c is set to ``FCN: 512{*}3''.

RBF-GAN whose RBF-based discriminator has been proved to be the optimal approximation to a discriminator can capture more accurate details of flow fields and further to predict the variations of flow fields \cite{hu2022flow}. Therefore, RBF-GAN is chosen as a comparative method to take both geometric-features and flight conditions as inputs, so as to predict aerodynamic performances. Considering that the discriminator of RBF-GAN is a RBFNN, the coordinate matrix with shape $281*2$ is expanded into a vector with shape $562*1$. The input of RBF-based discriminator is the combination of expanded coordinate vector and flight condition vector. The structure of RBF-based discriminator is set to ``RBF: 512{*}1''. In addition, according to \cite{hu2022flow}, the structure of corresponding generator is set to ``FCN: 1024{*}3''.

PGML is used to fuse the coordinates of airfoils and flight conditions in \cite{pawar2021physics}. PGML consists of two FCN-based subnetworks. The first subnetwork takes airfoil coordinates as inputs, and the structure is set to ``FCN: 1024{*}3''. The second subnetwork takes the flight conditions as inputs, and the structure is set to ``FCN: 1024{*}1''. The last layer of the two subnetworks is concatenated by a FCN to enable the fusion of coordinates of airfoils and flight conditions.

MLP is a complete FCN whose input is the combination of airfoil coordinates and flight conditions. MLP is the most intuitive fusion method that takes the combination of airfoils coordinates and flight conditions as inputs. The purpose of selecting MLP as a compartive method is to compare this intuitive fusion method with MDF. The structure of MLP is set to ``FCN: 1024{*}3''.

MDF takes manifold-features and flight conditions as inputs. The input manifold-features are organized as a vector with shape $1 \times 271$, therefore, the function network\_1 in MDF is designed as a FCN. The structure of FCN-based function network\_1 in MDF is set to ``FCN: 1024{*}3''. In addition, the structure of function network\_2 in MDF is set to ``FCN: 16{*}3'', and the structure of context network in MDF is set to ``FCN: 512{*}3''.

The MSE and the mean absolute error (MAE) are chosen as assessment indicators for predicted aerodynamic performances:
\begin{equation}
\label{equ_mse_mae_performance}
\nonumber
\begin{cases}
  MSE = \frac{1}{N} \sum_{z=1}^{N}\left(\frac{1}{K} \sum_{i=1}^{K}\left(y_{z i}-\hat{y}_{z i}\right)^{2}\right) \\
  MAE = \frac{1}{N} \sum_{z=1}^{N}\left(\frac{1}{K} \sum_{i=1}^{K} ||y_{z i}-\hat{y}_{z i} || \right)
\end{cases}
\end{equation}
where $y_{zi}$ denotes the predicted performance of the $z$th input and $\hat{y}_{zi}$ denotes the true performance of the $z$th input in the test set.

\subsubsection{Results and Analyses}

\begin{table*}[h!]
\caption{The average MSE and MAE of MTL\_g, MTL\_c, RBF-GAN, PGML, MLP and MDF.}
\label{tab_MTL_errors}
\centering
  \begin{tabular}{ccccc}
    \hline
    models   & MSE of $C_{D}$         & MAE of $C_{D}$         & MSE of $C_{L}$         & MAE of $C_{L}$         \\ \hline
    MTL\_g   & $7.41 \times 10^{-8}$  & $1.57 \times 10^{-4}$  & $6.27 \times 10^{-4}$  & $1.03 \times 10^{-2}$  \\
    MTL\_c \cite{hu2022aerodynamic} & $6.77 \times 10^{-8}$  & $1.56 \times 10^{-4}$    & $1.13 \times 10^{-4}$  & $8.15 \times 10^{-3}$ \\
    RBF-GAN \cite{hu2022flow}  & $1.30 \times 10^{-5}$      & $1.96 \times 10^{-3}$     & $3.42 \times 10^{-2}$            & $9.34 \times 10^{-2}$ \\

    PGML \cite{pawar2021physics}& $1.29 \times 10^{-6}$     & $2.78 \times 10^{-4}$     & $1.67 \times 10^{-3}$            &$1.35 \times 10^{-2}$  \\

    MLP \cite{xin2022surrogate} &  $7.29 \times 10^{-8}$    & $1.38 \times 10^{-4}$     & $\mathbf{1.07 \times 10^{-4}}$            & $\mathbf{7.90 \times 10^{-3}}$  \\
    MDF (our method) & $\mathbf{3.80 \times 10^{-8}}$  & $\mathbf{1.02 \times 10^{-4}}$  & $1.21 \times 10^{-4}$  & $8.62 \times 10^{-3}$  \\
    error reduction (MDF v.s. MLP) & $35.37\%$                   & $19.05\%$                  & $-13.08\%$                      &$-9.11\%$\\
    \hline
  \end{tabular}
\end{table*}

\begin{figure*}[h!]
    \centering
    \subfigure[]{
       \includegraphics[scale=0.33]{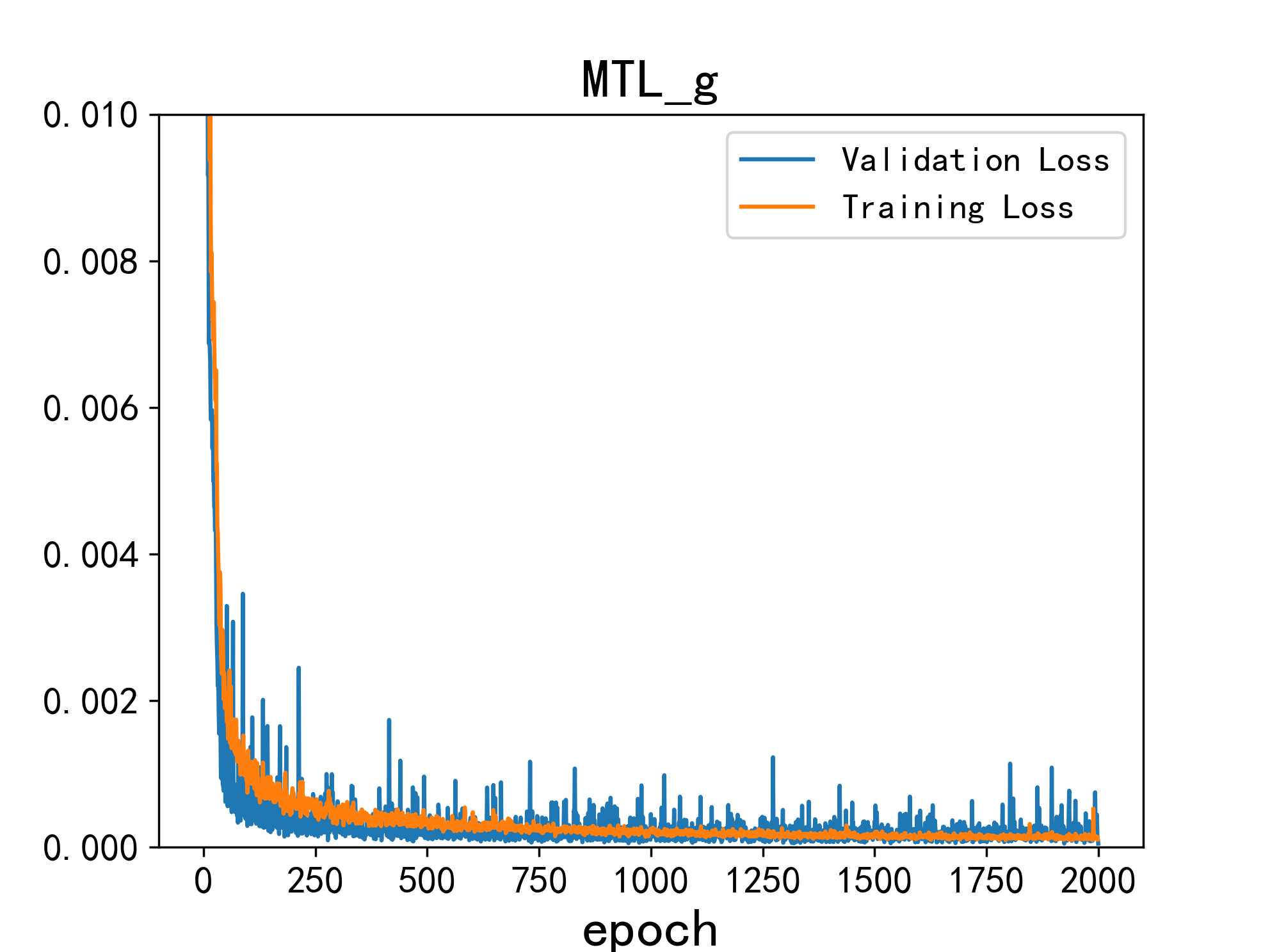}
    }
    \quad
    \subfigure[]{
       \includegraphics[scale=0.33]{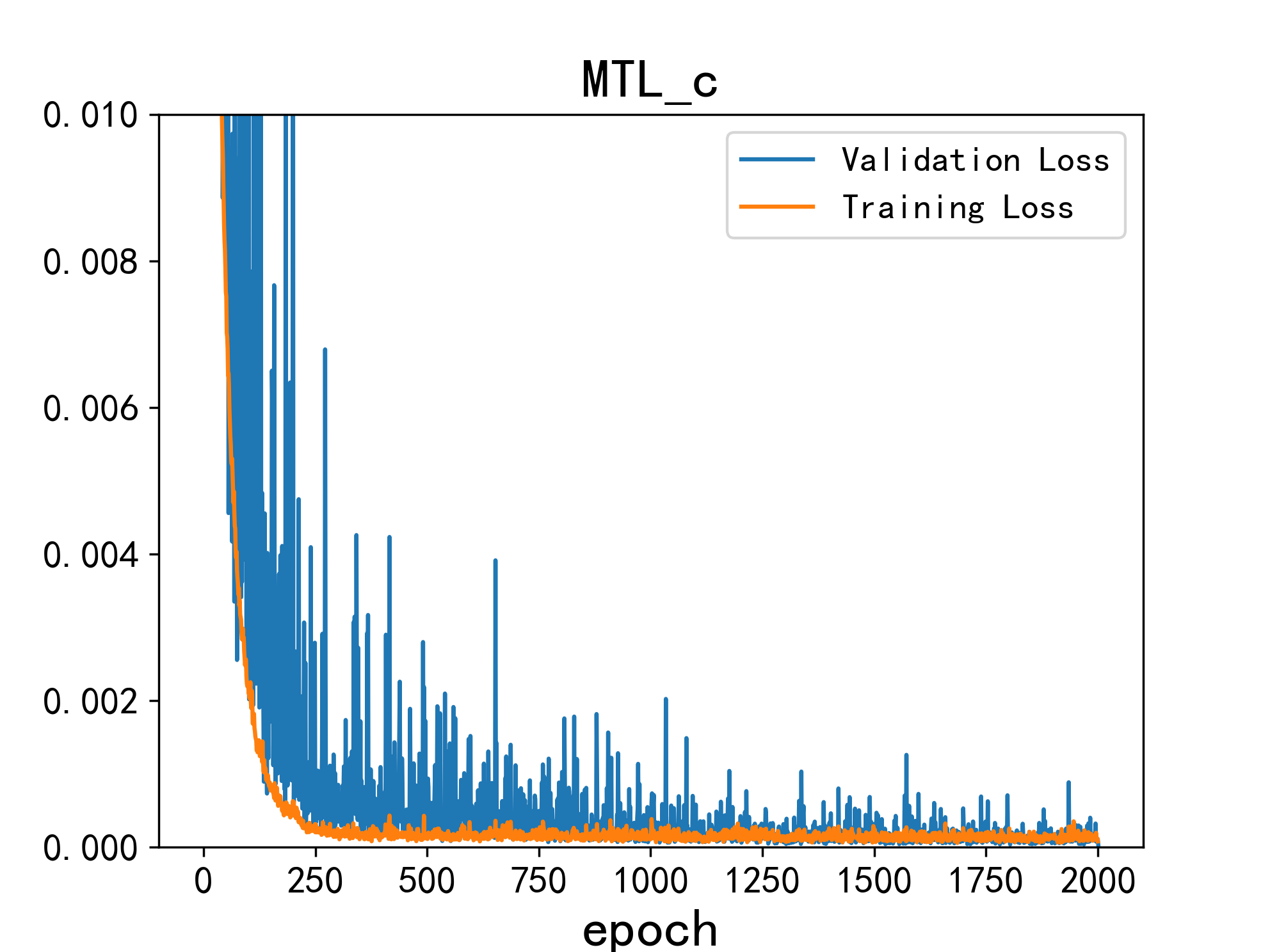}
    }
    \quad
    \subfigure[]{
       \includegraphics[scale=0.33]{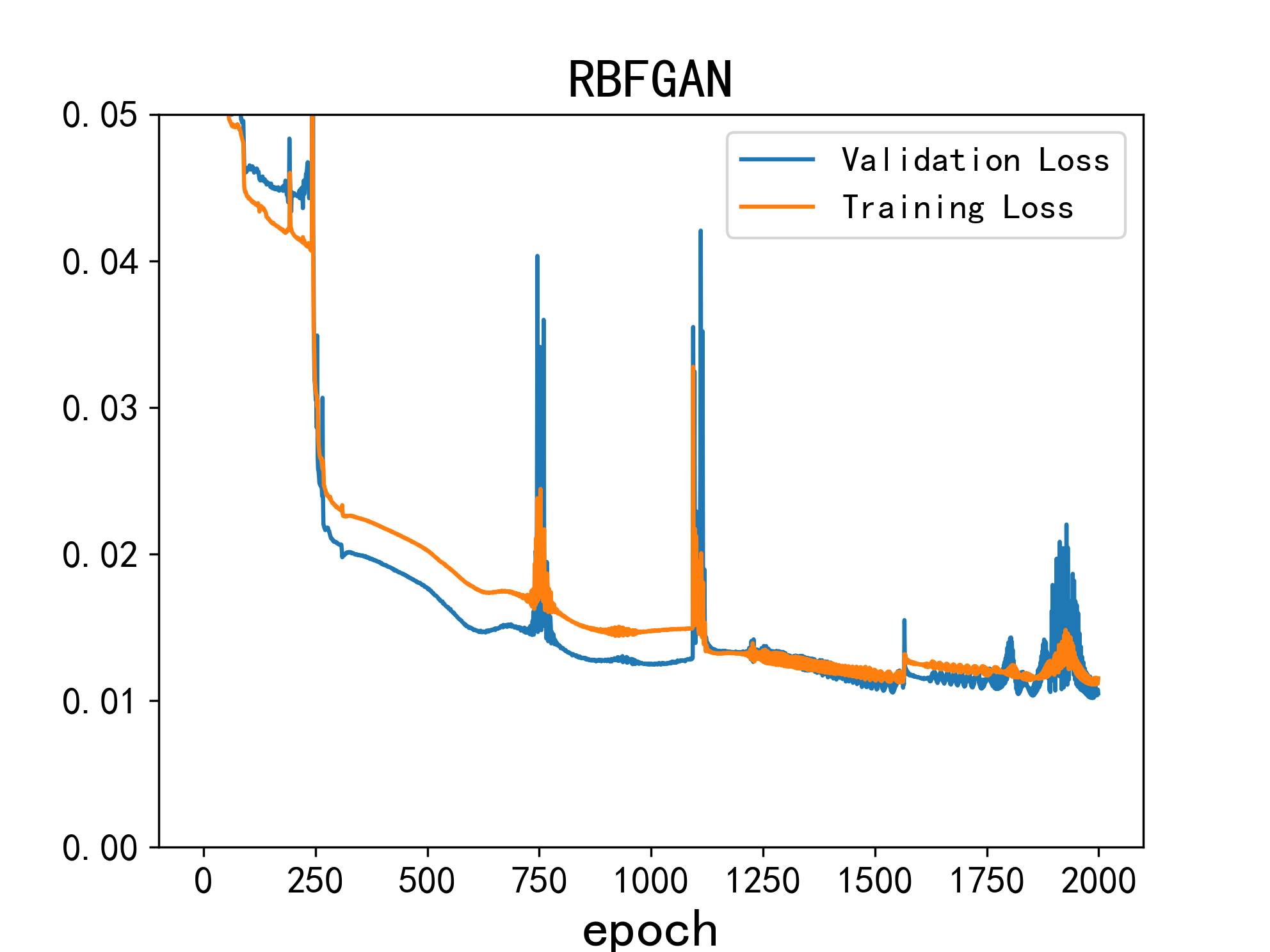}
    }
    \quad
    \subfigure[]{
       \includegraphics[scale=0.33]{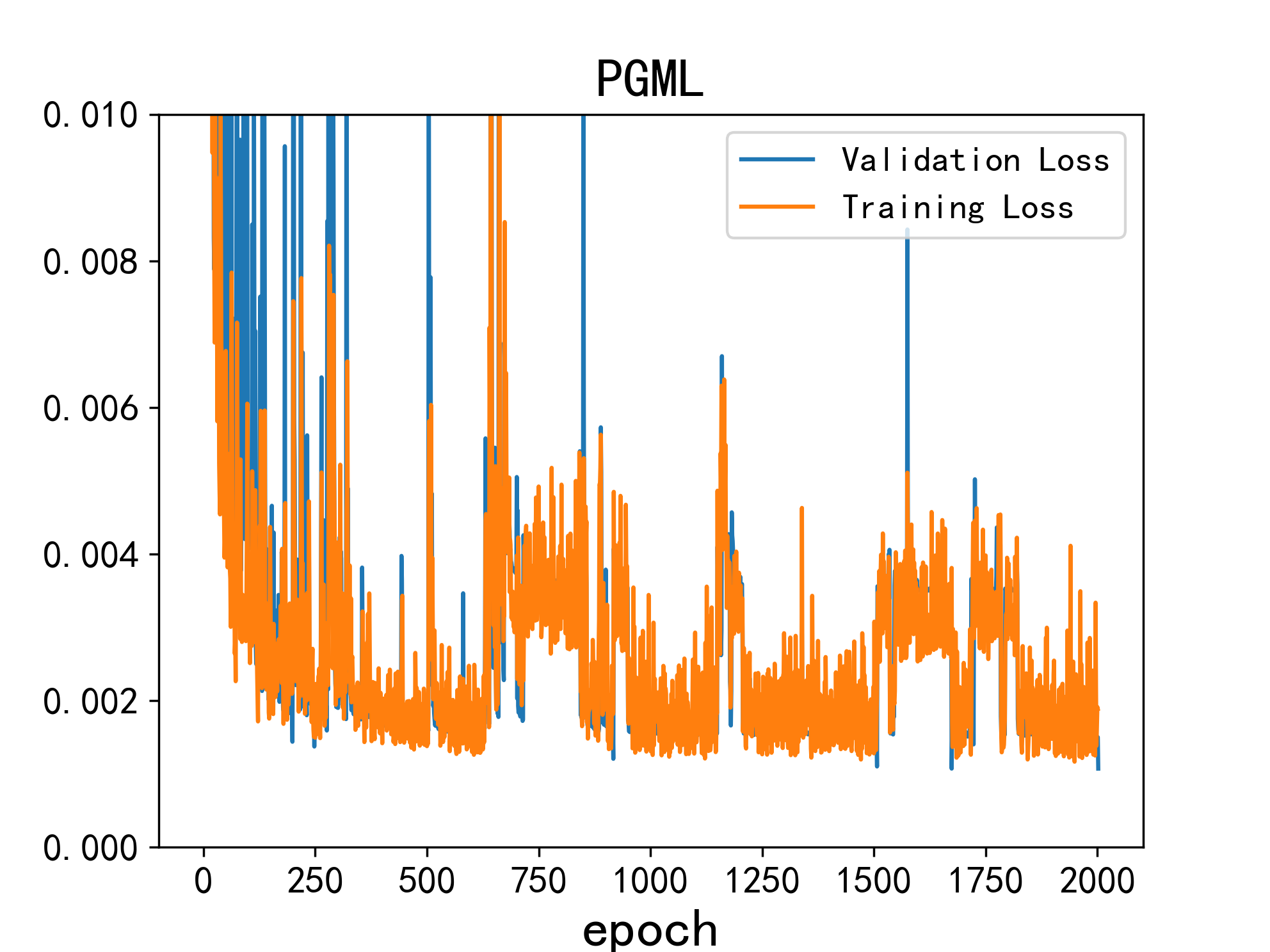}
    }
    \quad
    \subfigure[]{
       \includegraphics[scale=0.33]{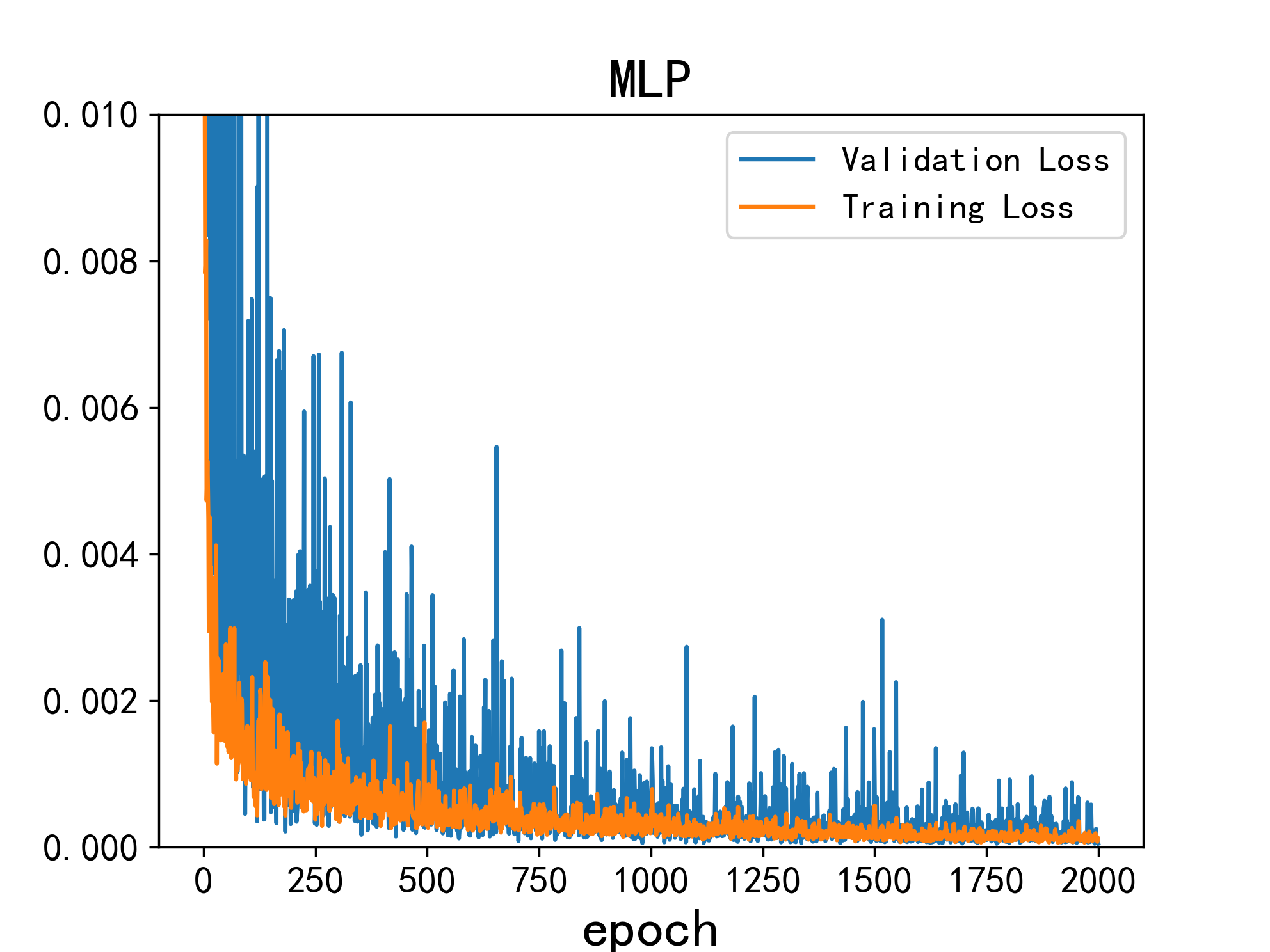}
    }
    \quad
    \subfigure[]{
       \includegraphics[scale=0.33]{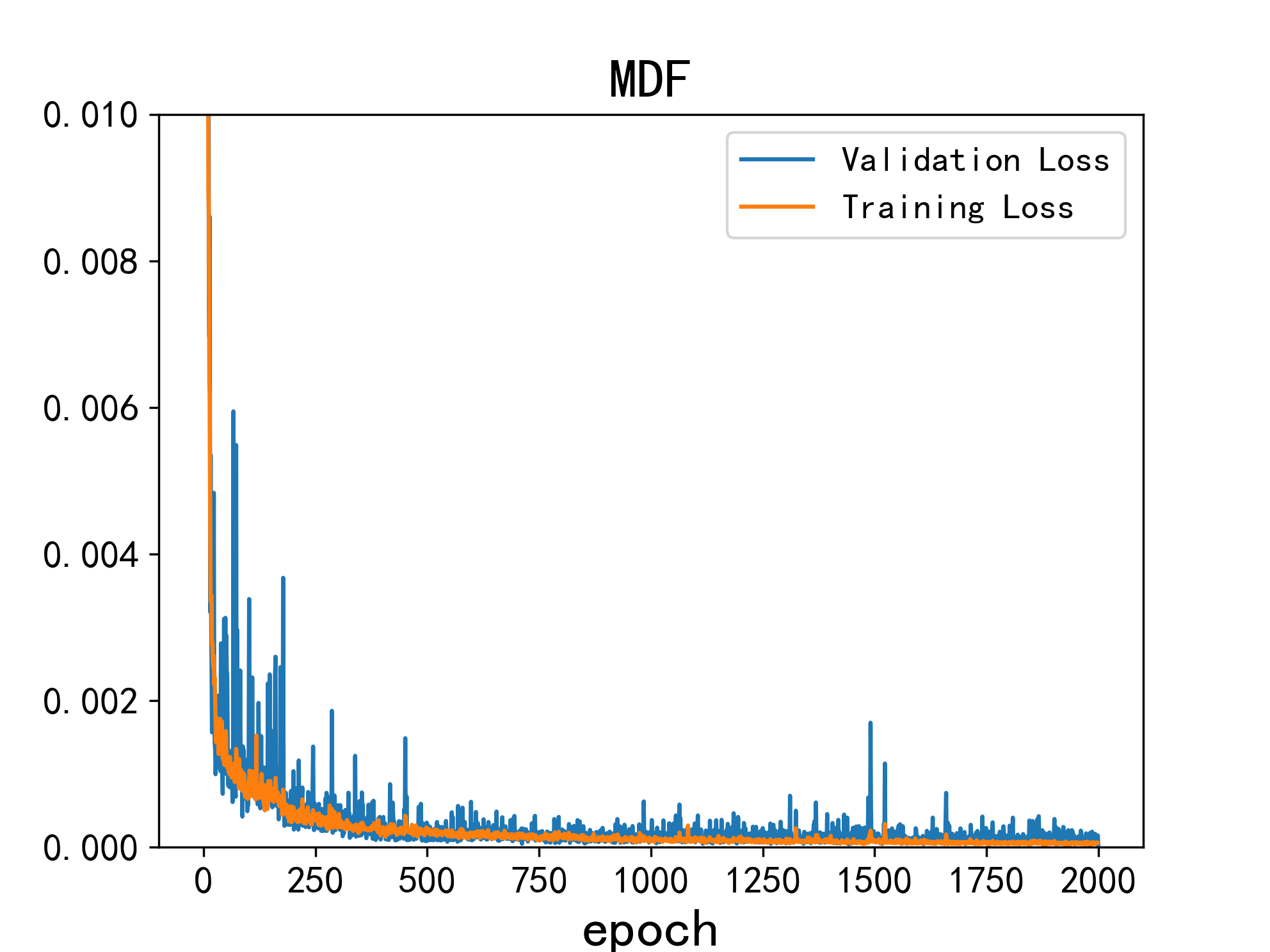}
    }
    \caption{The training loss and validation loss of methods compared in Experiments II in round 2. }
    \label{fig_loss_2}
\end{figure*}

\begin{figure*}[h!]
    \centering
    \subfigure[]{
       \includegraphics[scale=0.33]{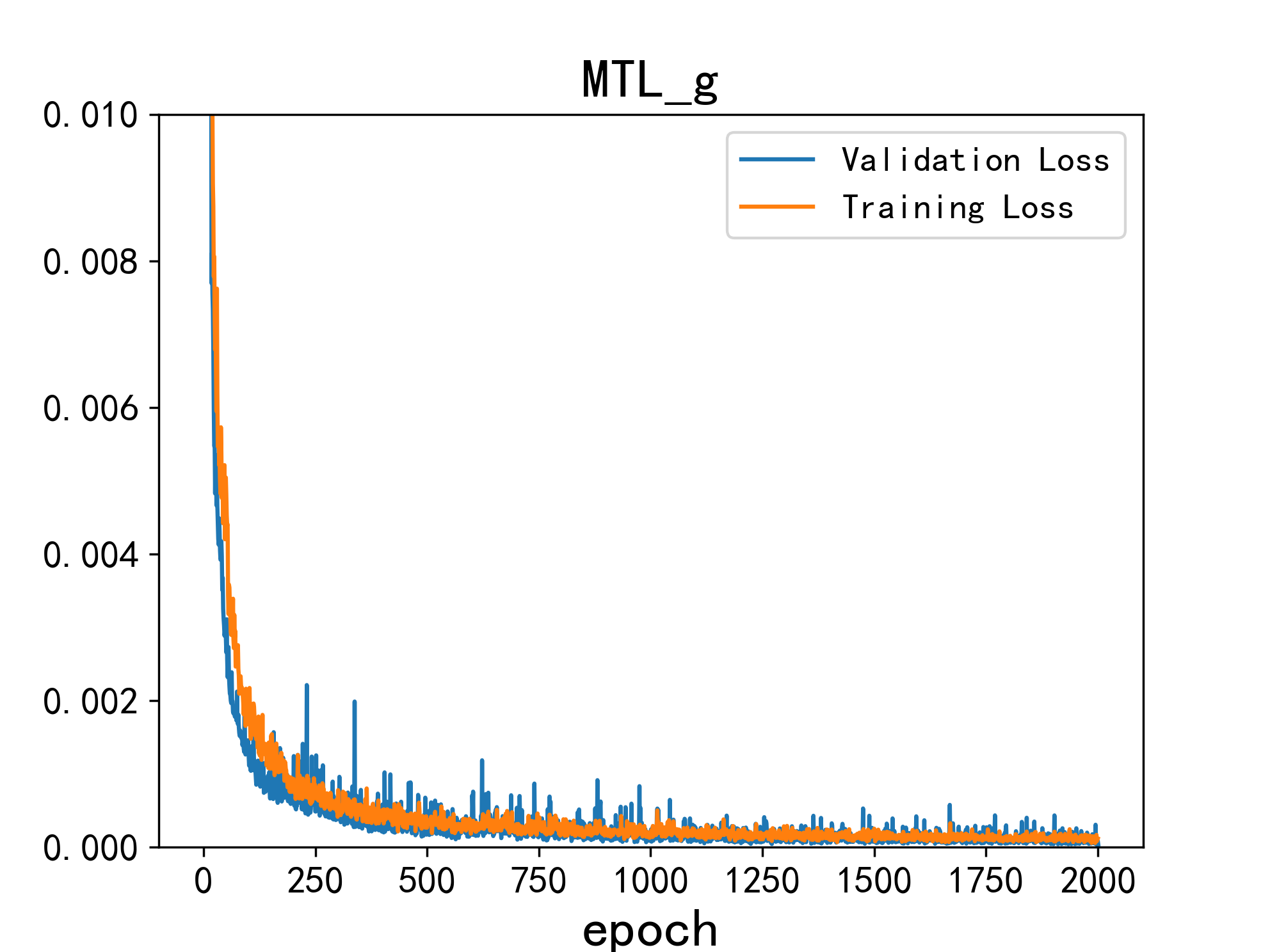}
    }
    \quad
    \subfigure[]{
       \includegraphics[scale=0.33]{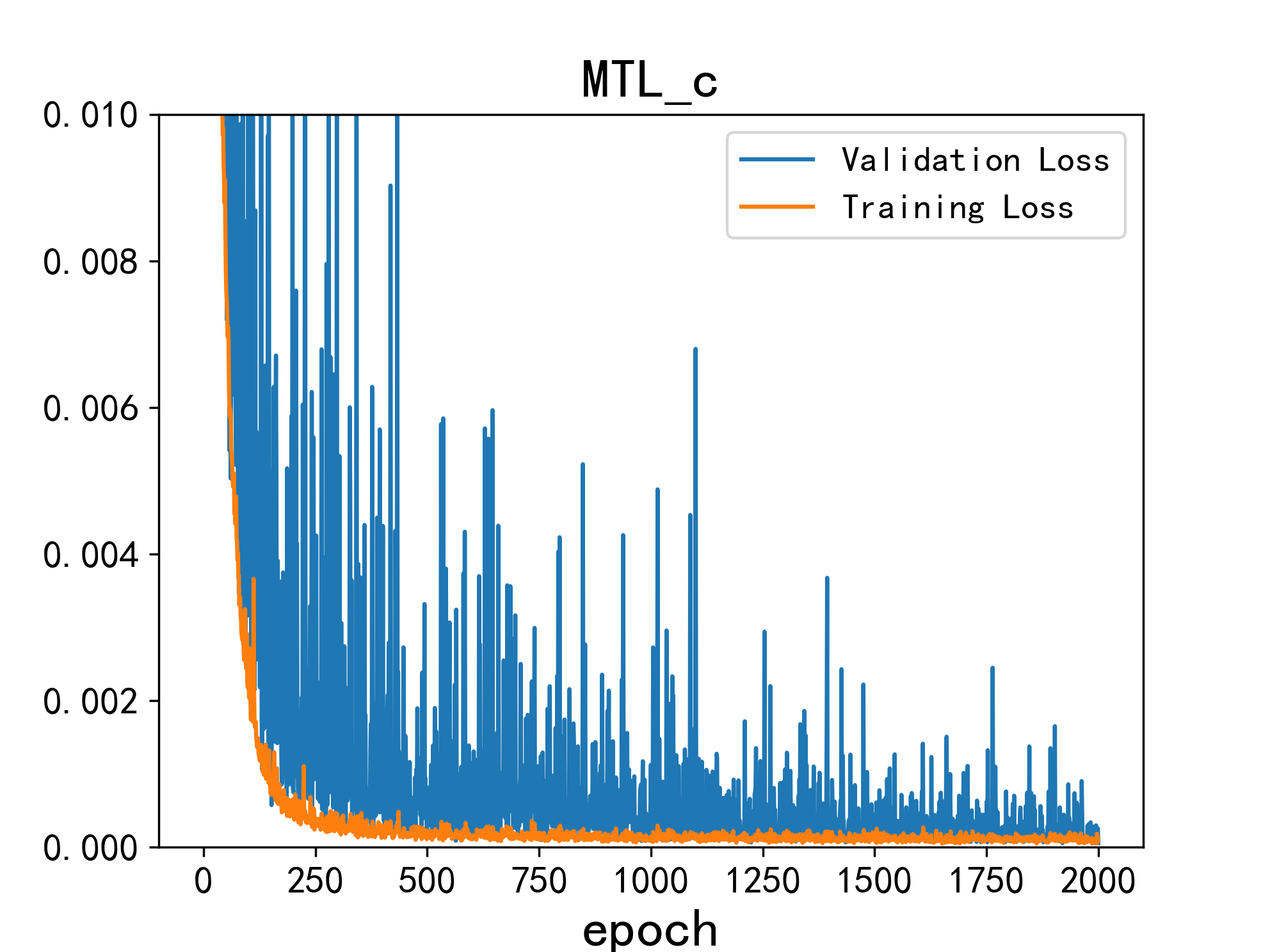}
    }
    \quad
    \subfigure[]{
       \includegraphics[scale=0.33]{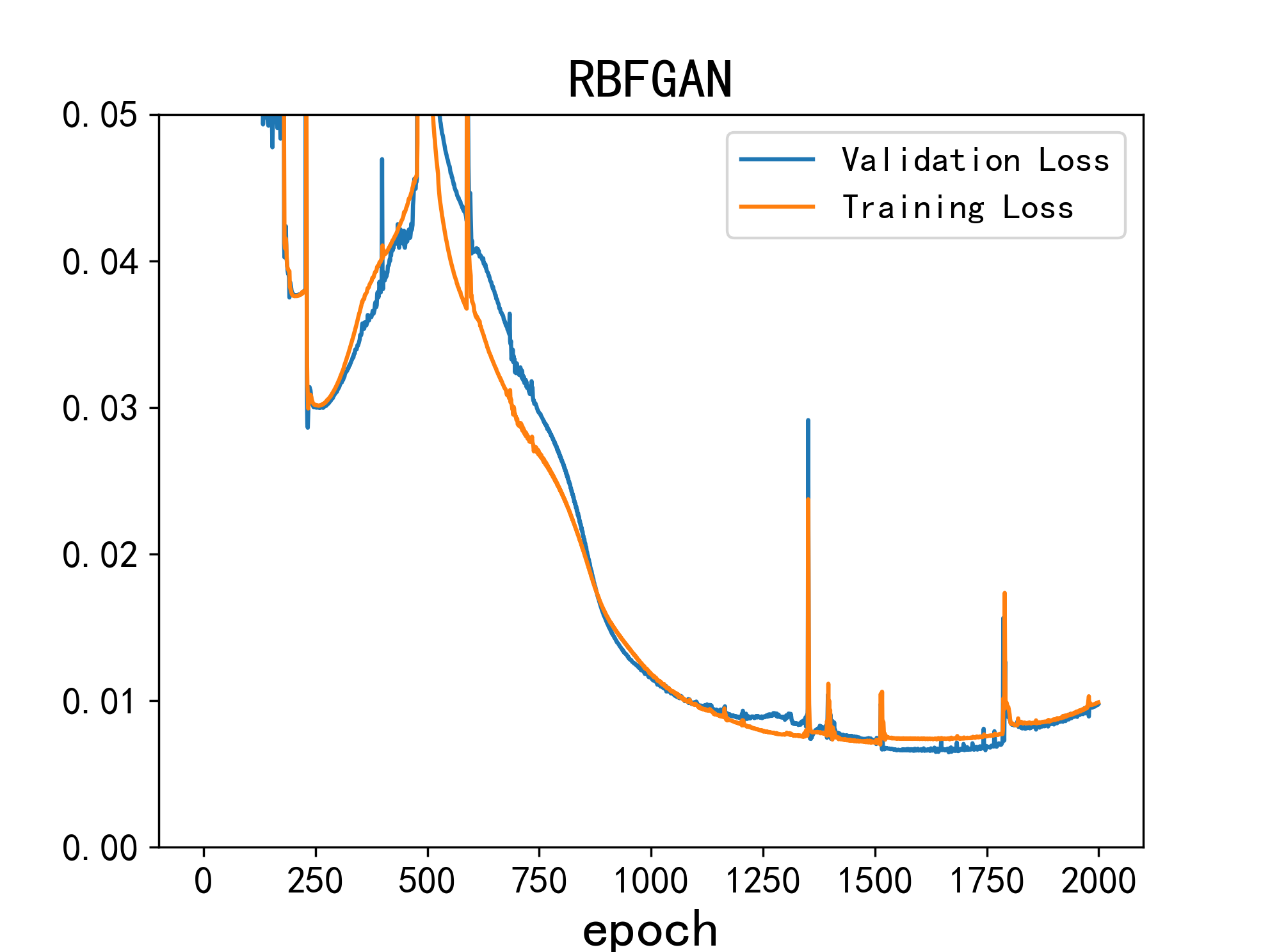}
    }
    \quad
    \subfigure[]{
       \includegraphics[scale=0.33]{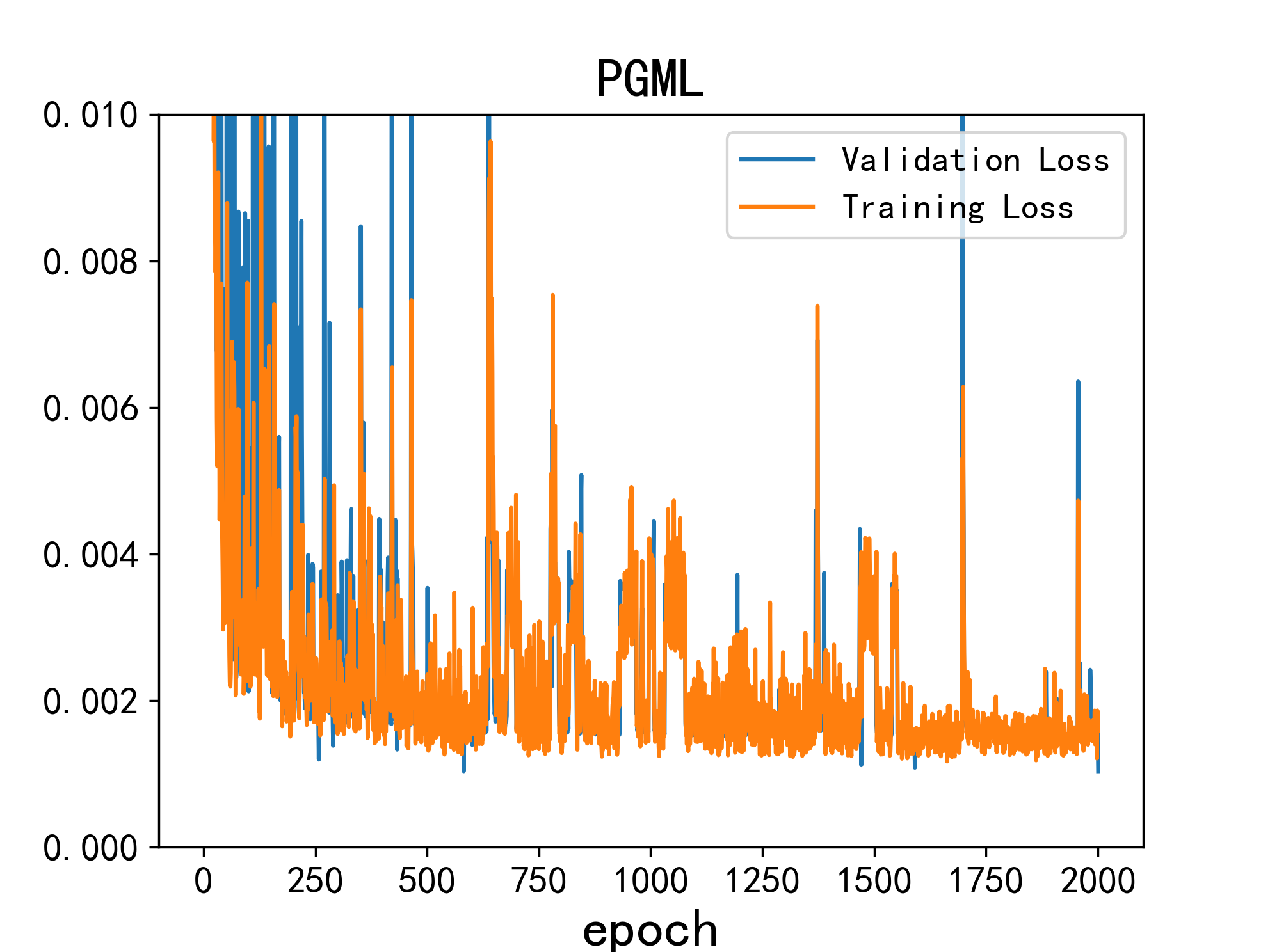}
    }
    \quad
    \subfigure[]{
       \includegraphics[scale=0.33]{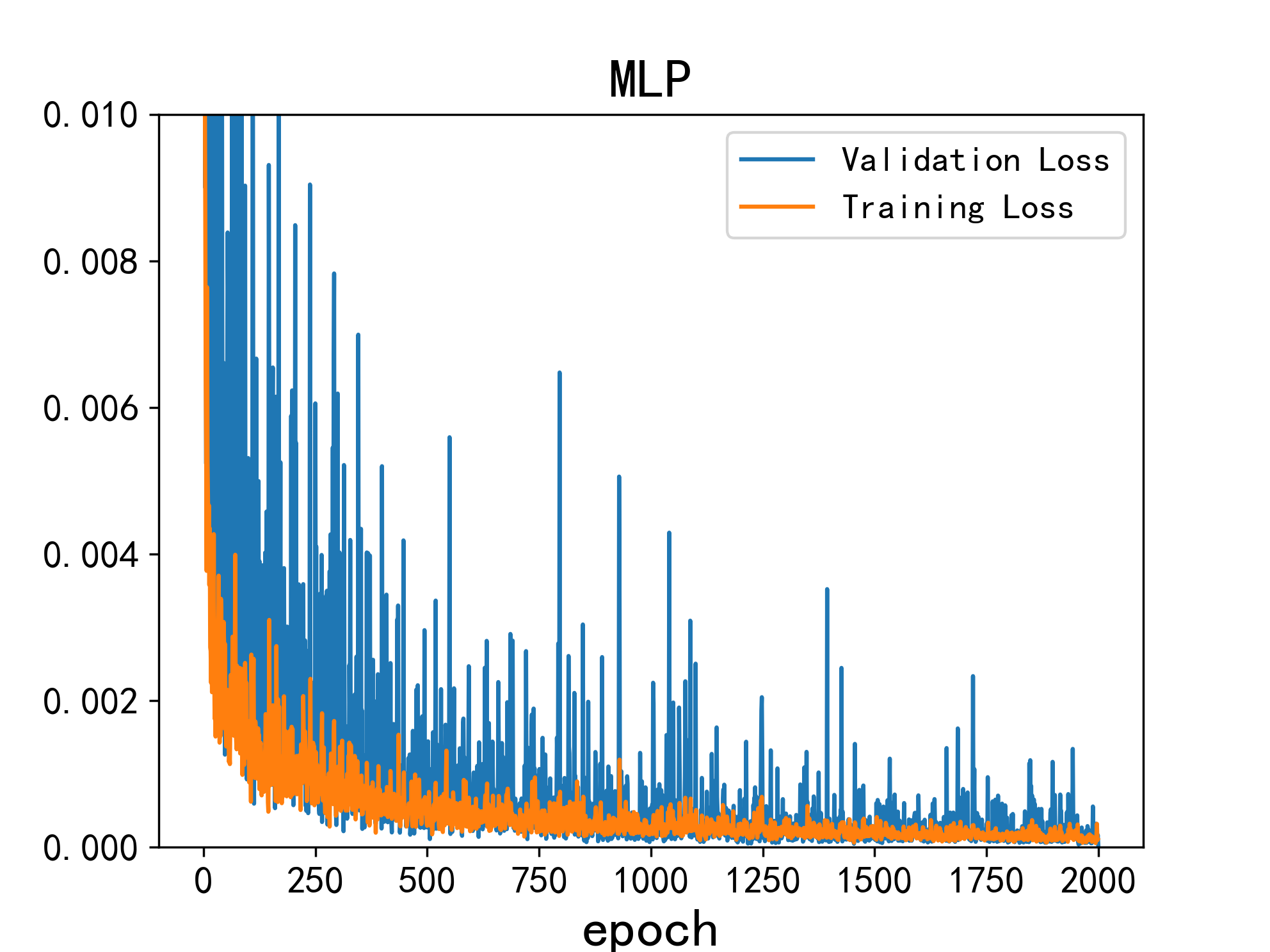}
    }
    \quad
    \subfigure[]{
       \includegraphics[scale=0.33]{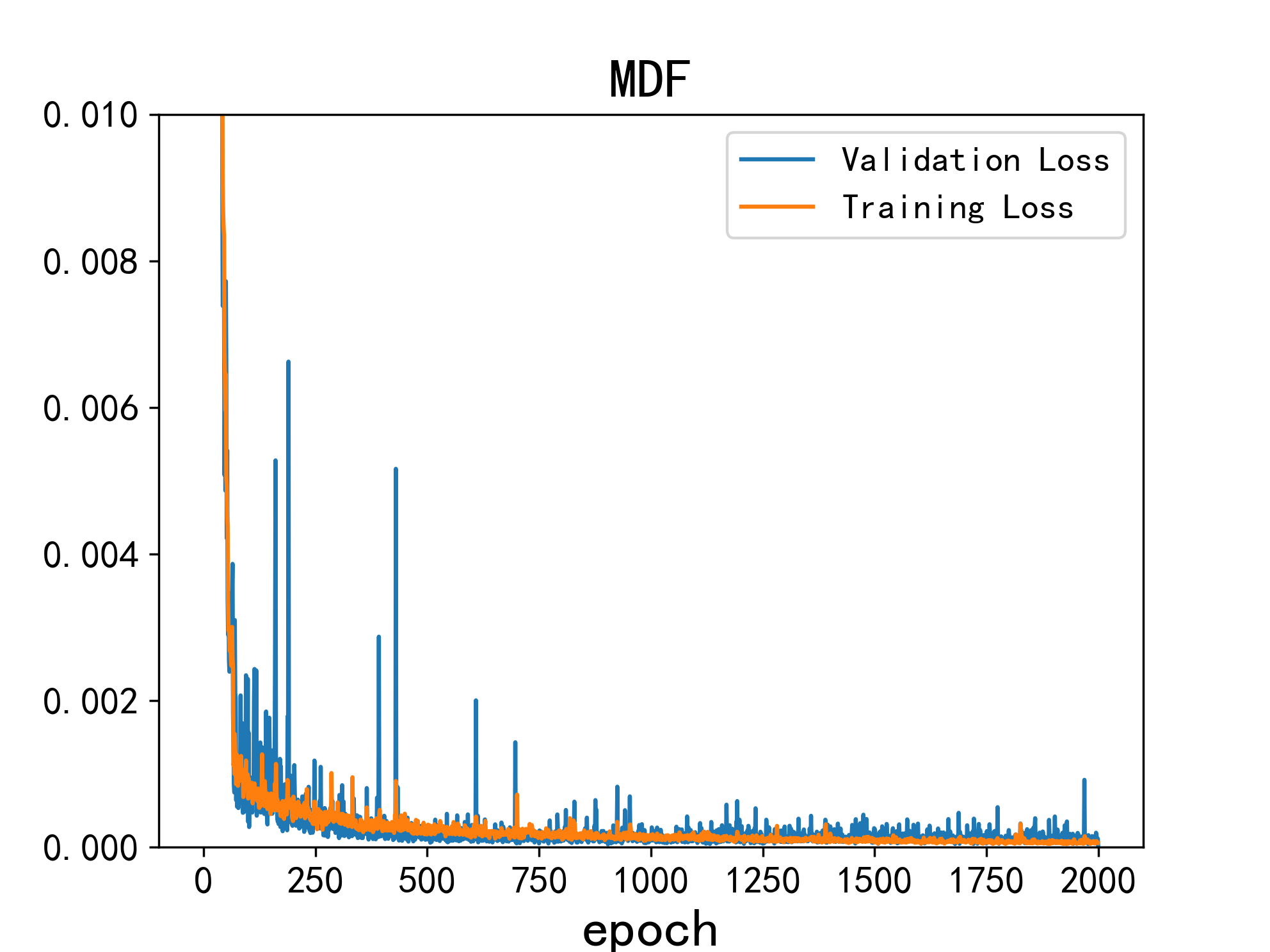}
    }
    \caption{The training loss and validation loss of methods compared in Experiments II in round 9. }
    \label{fig_loss_9}
\end{figure*}

\begin{figure*}[h!]
    \centering
    \subfigure[]{
       \includegraphics[scale=0.35]{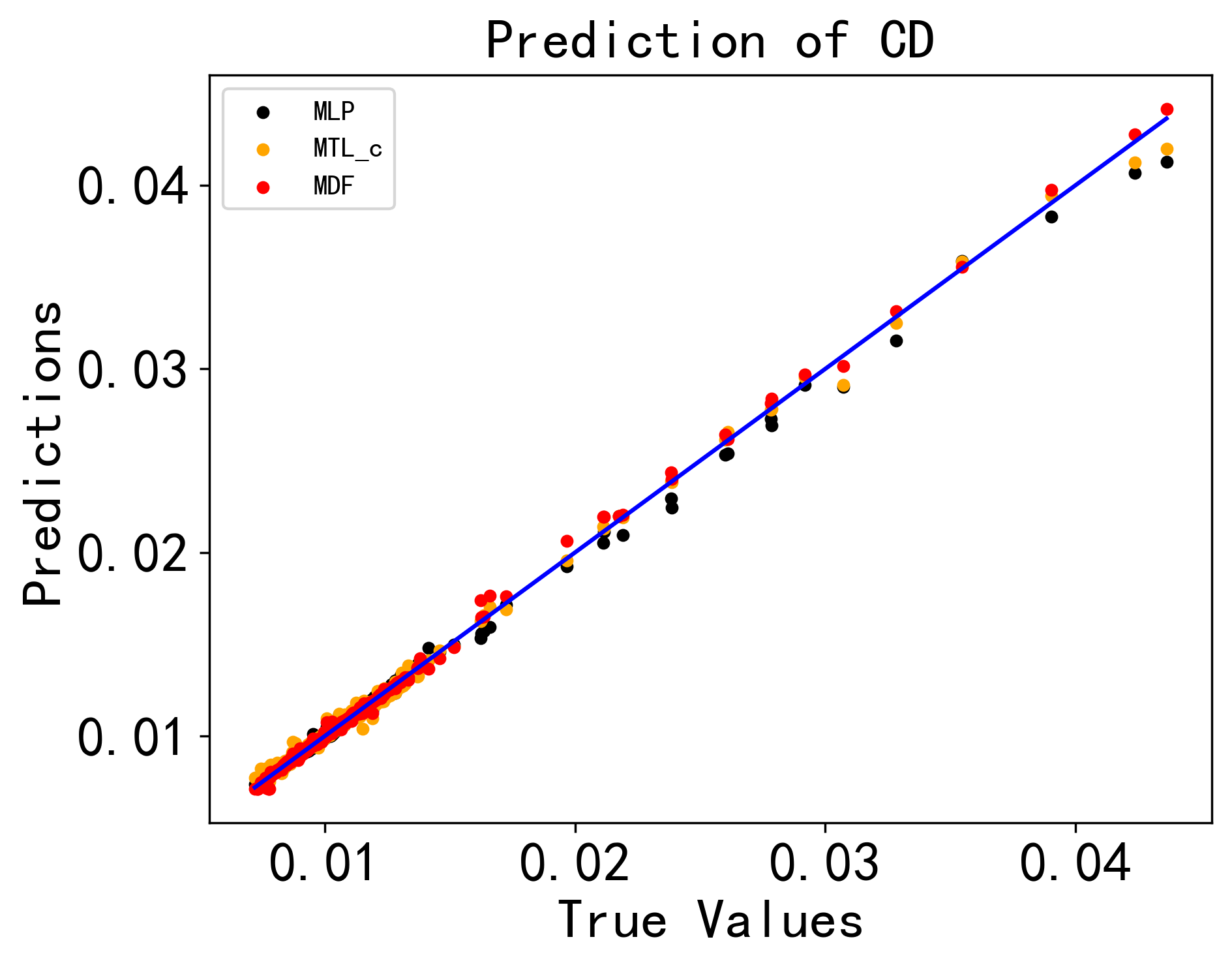}
    }
    \quad
    \subfigure[]{
       \includegraphics[scale=0.35]{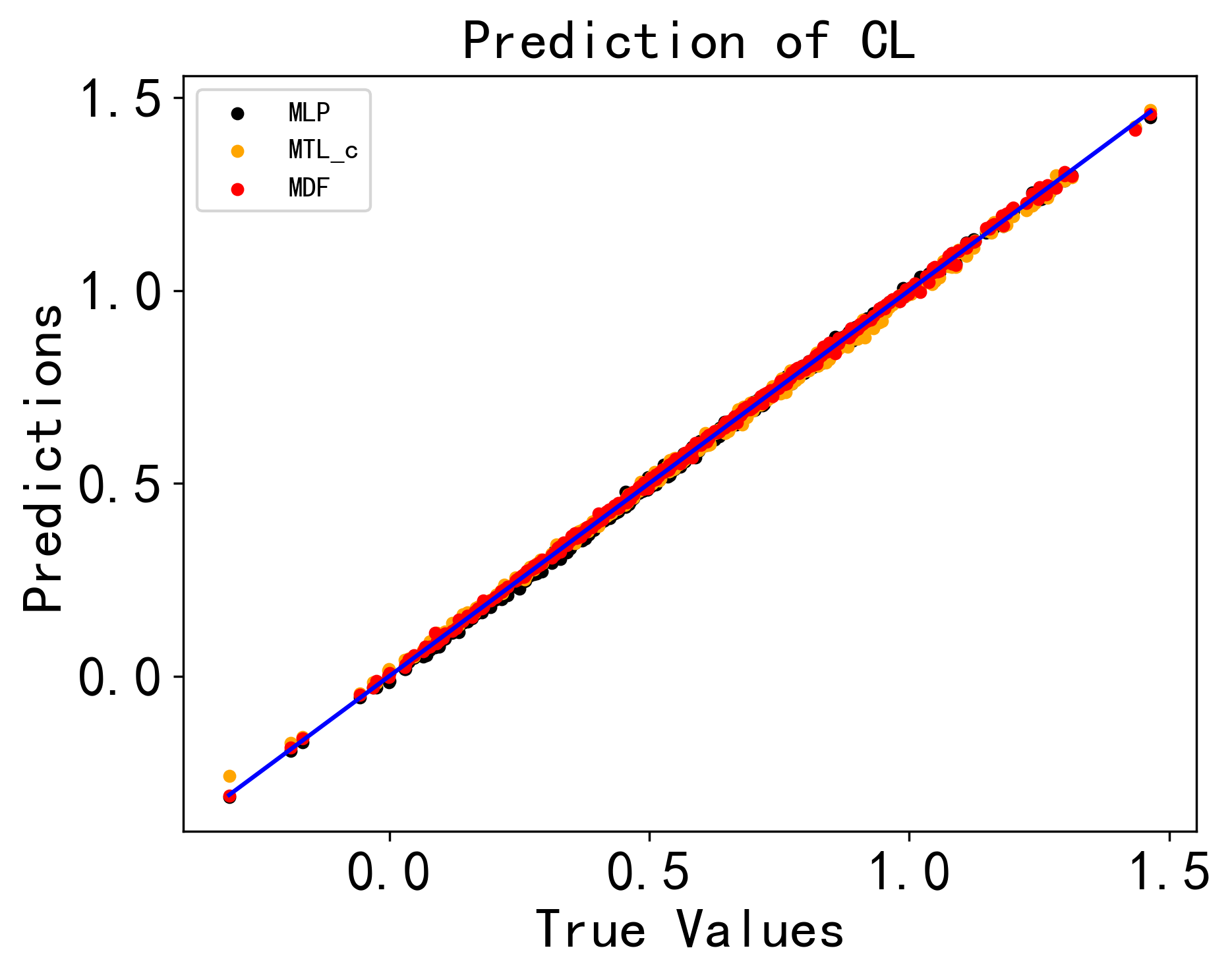}
    }
    \\
    \subfigure[]{
       \includegraphics[scale=0.35]{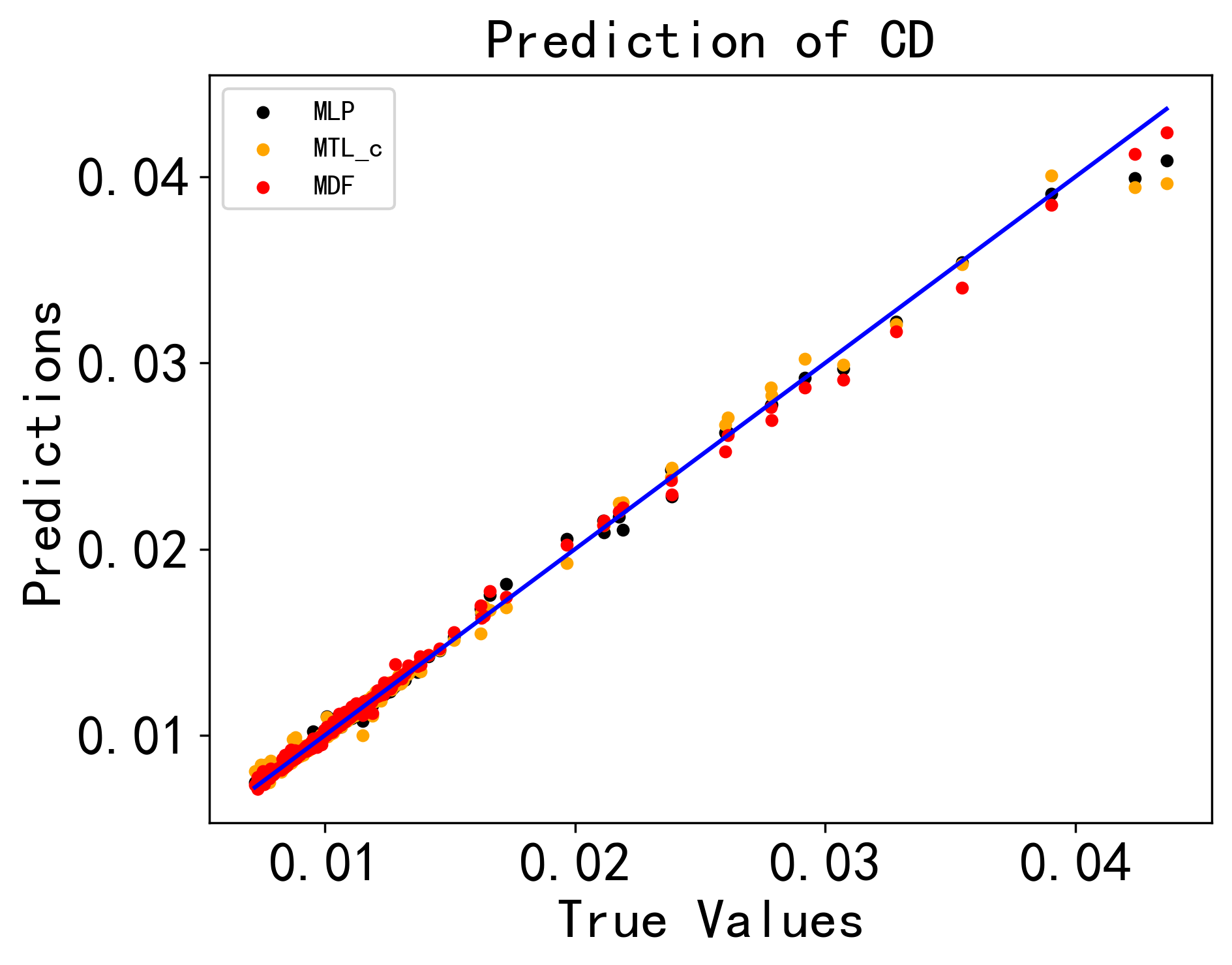}
    }
    \quad
    \subfigure[]{
       \includegraphics[scale=0.35]{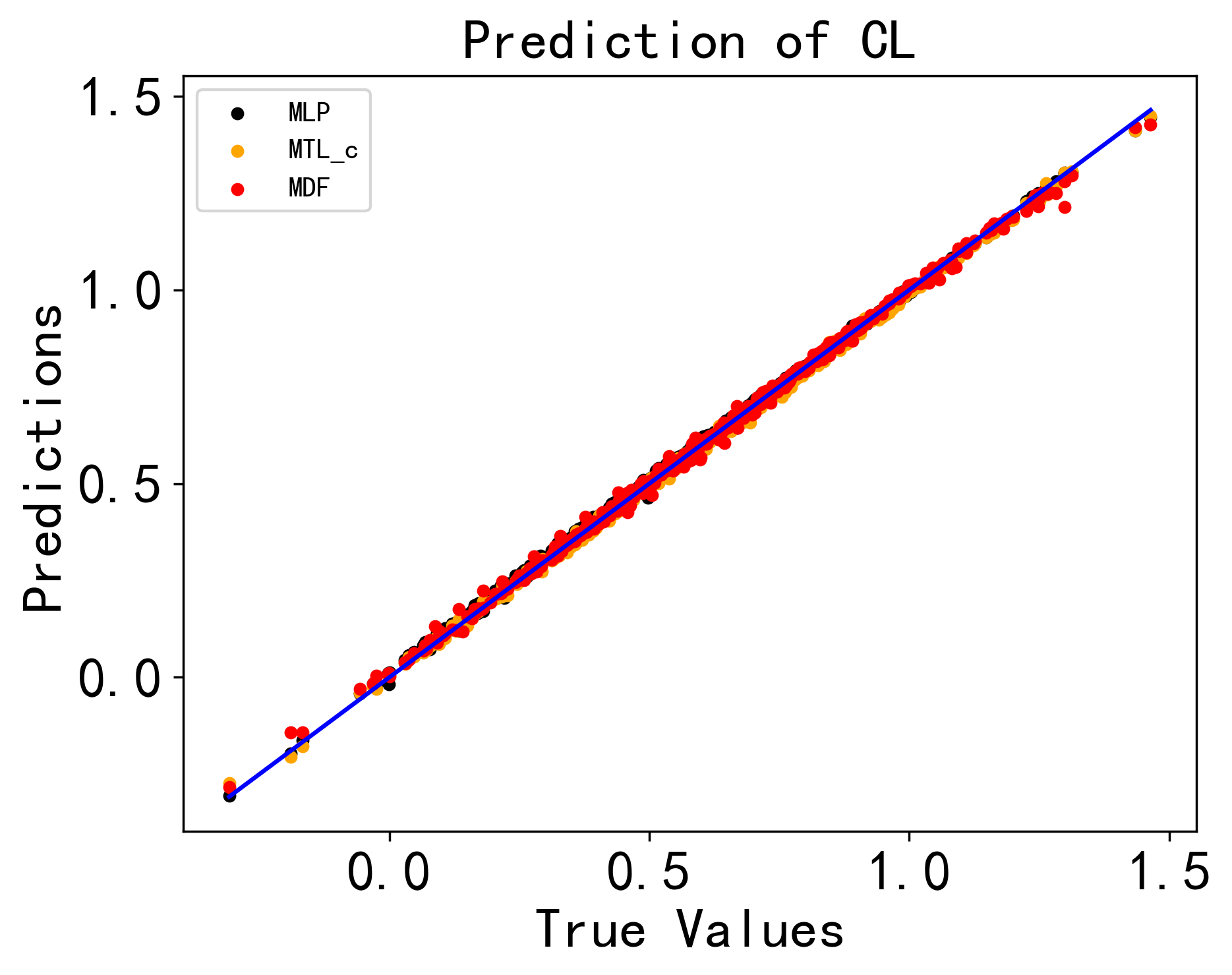}
    }
    \caption{The $C_{D}$ and the $C_{L}$ predicted by MLP, MTL\_c and MDF in two different round experiments. }
    \label{fig_cd_cl_45_line}
\end{figure*}

\begin{figure*}[h!]
    \centering
    \subfigure[]{
       \includegraphics[scale=0.35]{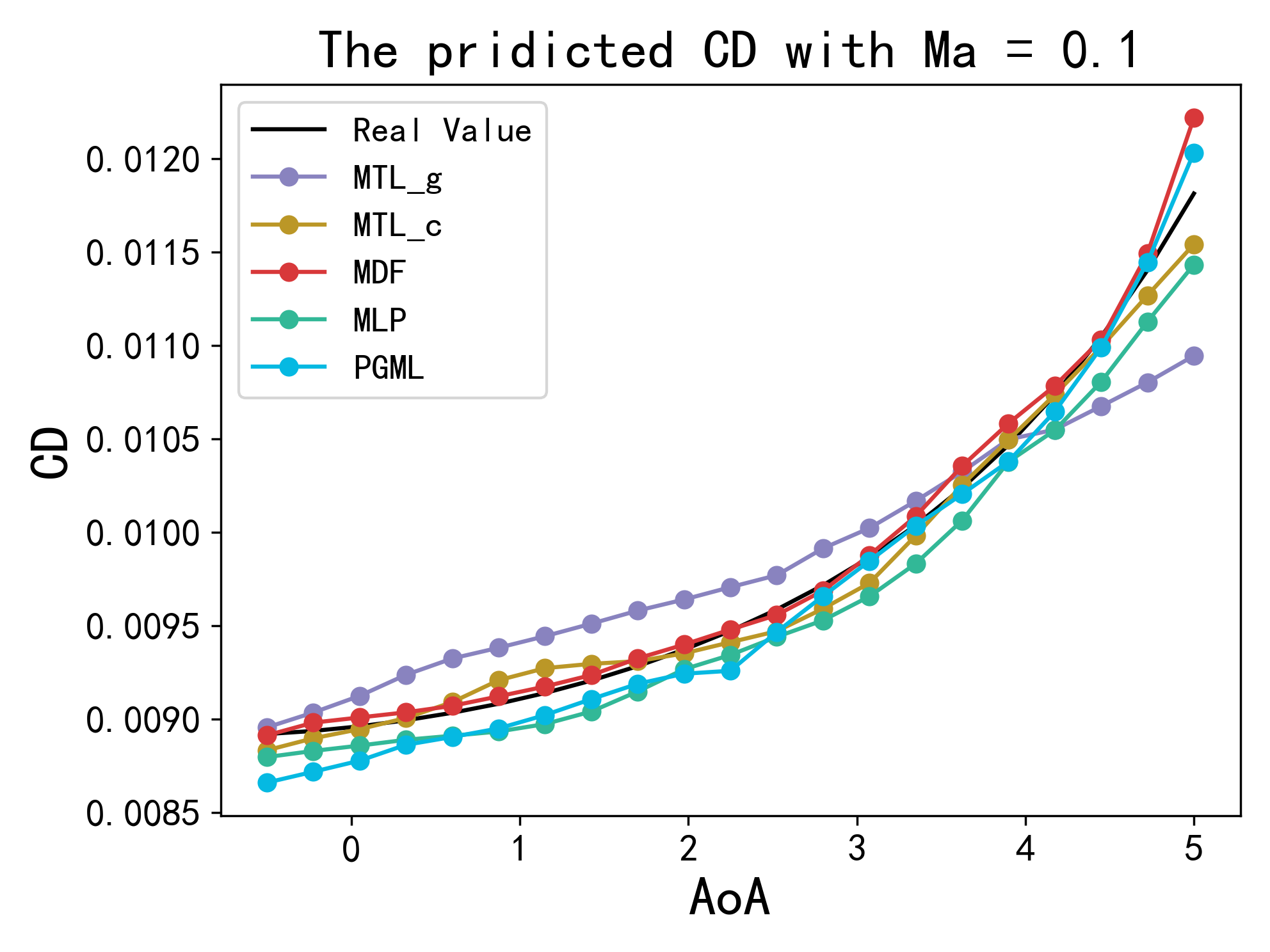}
    }
    \quad
    \subfigure[]{
       \includegraphics[scale=0.35]{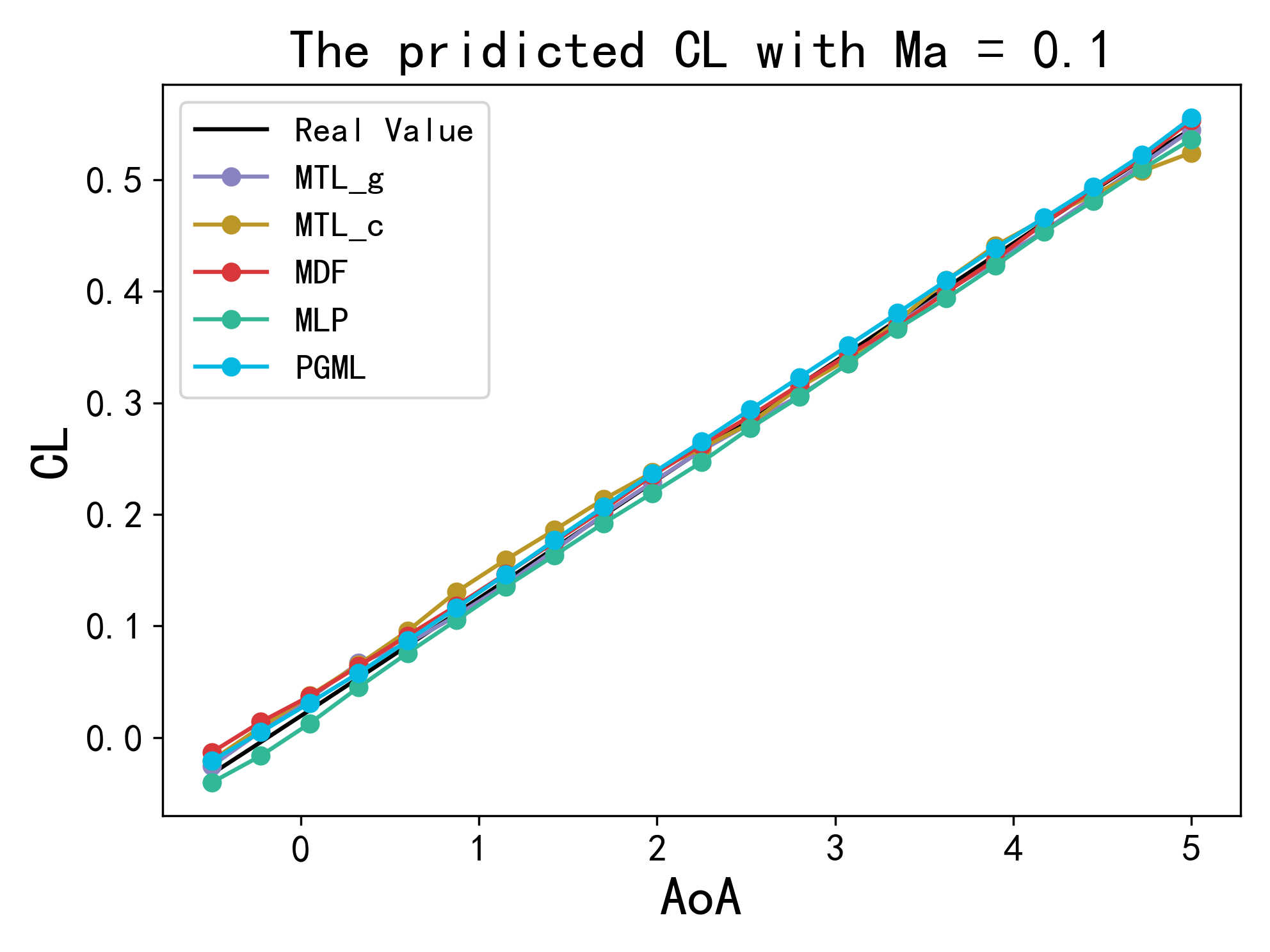}
    }
    \\
    \subfigure[]{
       \includegraphics[scale=0.35]{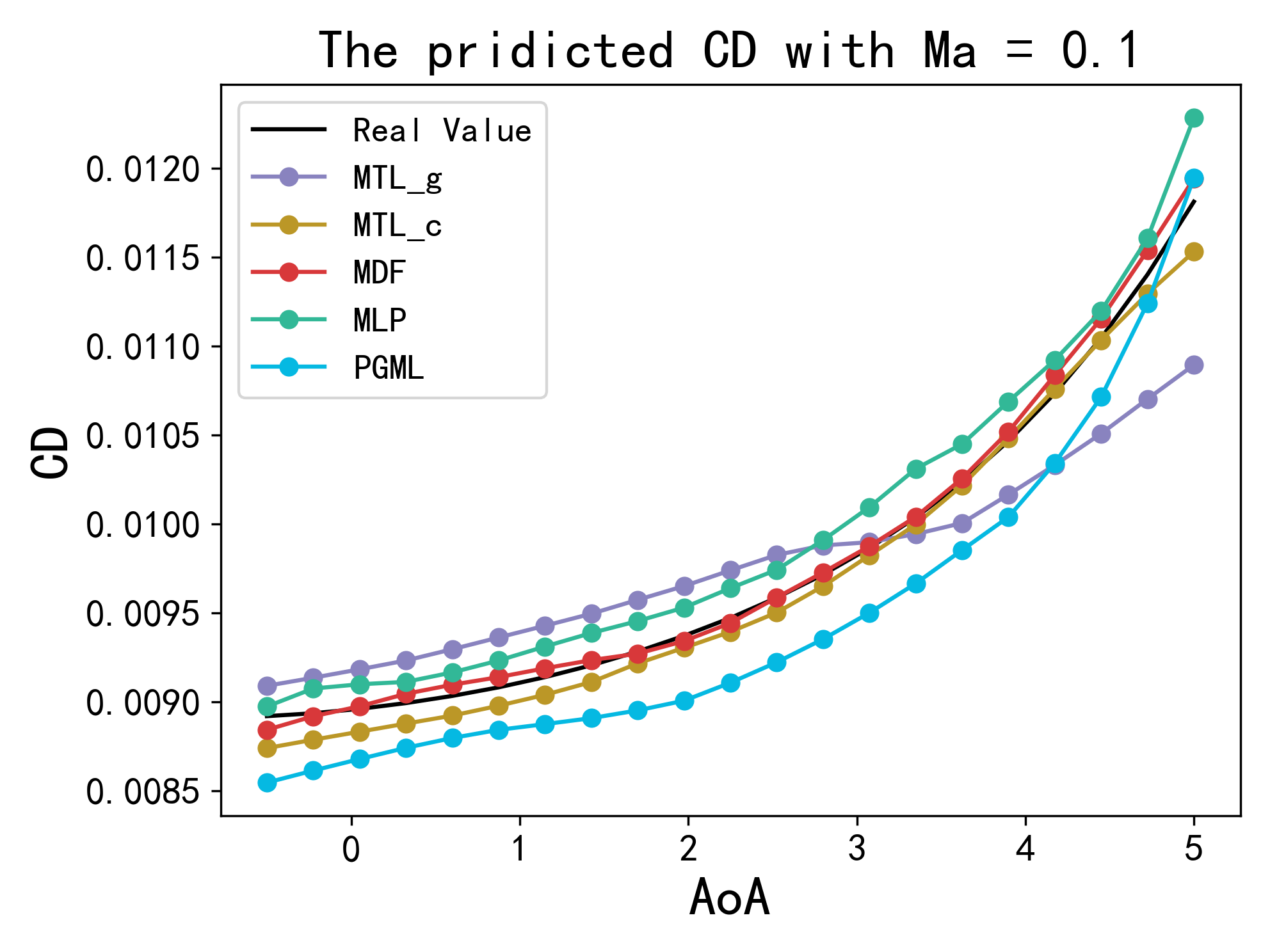}
    }
    \quad
    \subfigure[]{
       \includegraphics[scale=0.35]{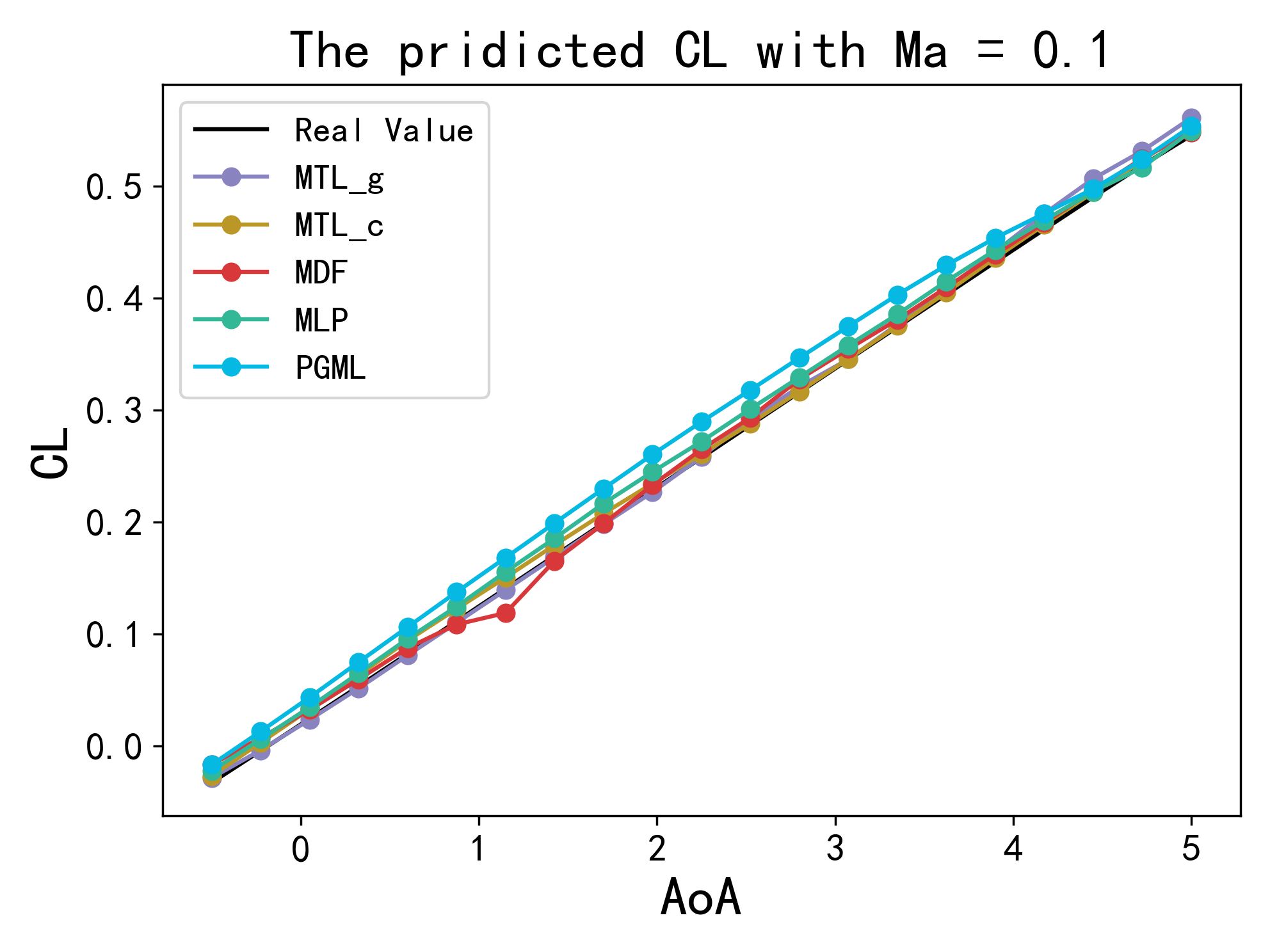}
    }
    \caption{The $C_{L}$ and $C_{D}$ changes with respect to $\alpha$ (i.e., AoA) with Ma = 0.1 in two different round experiments. }
    \label{fig_cd_cl}
\end{figure*}

In the field of Aerodynamics, $C_{D}$ of an aircraft is of a small order of magnitude compared with $C_{L}$, which leads to difficulties in predicting $C_{D}$ \cite{bussy2018effects, timmer2020simple, onel2018drag, marchesse2019drag}. Hence, we pay special attention to the prediction of $C_{D}$ in this paper.

Tab.\ref{tab_MTL_errors} shows the average prediction errors of MTL\_g, MTL\_c, RBF-GAN, PGML, MLP and MDF in 10 rounds. In each column, the figures in bold are the minimum average test errors. we see that, in term of $C_{D}$, both the test MSE and MAE of MDF are smallest among all methods, while both the test MSE and MAE of RBF-GAN are the largest. In term of $C_{L}$, the test MSE and MAE of MLP are the smallest, and both the test MSE and MAE of MDF are in the same order of magnitude as those of MLP. By comparing MDF and MLP, we see that (1) the average MSE of $C_{D}$ predicted by MDF is reduced by $35.37\%$; (2) the average MAE of $C_{D}$ predicted by MDF is reduced by $19.05\%$; (3) the average MSE of $C_{L}$ predicted by MDF is increased by 13.08\%; (4) the average MAE of $C_{L}$ predicted by MDF is increased by 9.11\%. This table indicates that the manifold-features extracted by manifold-based airfoil geometric-feature extraction module can be fused with flight conditions by MDF to reduce the prediction errors of $C_{D}$ while keeping the predicted $C_{L}$ in the same order of magnitude with $C_{L}$ predicted by MLP.

Fig. \ref{fig_loss_distribution} shows the distributions of MSEs and MAEs of $C_{D}$ and $C_{L}$ predicted by MTL\_g, MTL\_c, PGML, MLP and MDF in 10 rounds. Because the predicted errors of RBF-GAN are too large, the distributions of predicted errors by RBF-GAN is omited in this figure. In each subgraph, the blue line at top indicates the maximum value, the blue line in the middle represents the mean value, and the blue line at bottom indicates the minimum value. Besides, the light blue shade indicates the intensity of errors. In subgraph (a), we see that the average MSE of $C_{D}$ predicted by MDF is the smallest among all methods. In subgraph (b), both the minimum and mean MAE of $C_{D}$ predicted by MDF are smaller than those predicted by other methods. In subgraph (c), all the maximum, minimum and mean MSE of $C_{L}$ predicted by MTL\_c, MLP and MDF are similar. In subgraph (d), the mean MAE of $C_{L}$ predicted by MDF is similar to those predicted by MTL\_c and MLP. These subgraph indicates that MDF can reduce the predicted errors of $C_{D}$ while keeping the predicted errors of $C_{L}$ similar to those predicted by MTL\_c and MLP.

Fig.~\ref{fig_loss_2} and Fig.~\ref{fig_loss_9} depict the training loss and validation loss of above methods in two different rounds. Subgraph (a) $\sim$ (f) in Fig.~\ref{fig_loss_2} show the loss variations of MTL\_g, MTL\_c, RBF-GAN, PGML, MLP and MDF in round 2, respectively. Subgraph (a) $\sim$ (f) in Fig.~\ref{fig_loss_9} show the loss variations of MTL\_g, MTL\_c, RBF-GAN, PGML, MLP and MDF in round 9, respectively. We see that the fluctuations of both the training loss and the validation loss of converged MDF (epoch $ \geq 1500$) are smaller than those of other methods. In addition, the fluctuations of the validation loss of MDF are always smaller than those of MTL\_c, PGML and MLP during the entire training process. From comparisons of loss variations, we deduce that it make more sense to fuse manifold-features and flight conditions together by MDF, and this fusion is beneficial to the stability of training process.

Fig.\ref{fig_cd_cl_45_line} shows the $C_{D}$ and $C_{L}$ predicted by MLP, MTL\_c and MDF (three optimal methods in Tab. \ref{tab_MTL_errors}) in the same two rounds. Subgraph (a) and (b) show the predicted $C_{D}$ and $C_{L}$ in round 2, and subgraph (c) and (d) show the predicted $C_{D}$ and $C_{L}$ in round 9. We see that the $C_{D}$ predicted by MDF are closely clustered around the diagonal. On the contrary, the $C_{D}$ predicted by MLP and MTL\_c deviate from the diagonal significantly when $C_{D} > 0.03$. In addition, the $C_{L}$ predicted by three methods are similar.

The variations of predicted $C_{D}$ and $C_{L}$ with $\alpha$ in the same two rounds are shown in Fig. \ref{fig_cd_cl}. Because the predicted errors of RBF-GAN are largest, the $C_{D}$ and $C_{L}$ predicted by RBF-GAN are not shown in this figure. Subgraph (a) and (b) show the variations of $C_{D}$ and $C_{L}$ with $\alpha$ in round 2, and subgraph (c) and (d) show the variations of $C_{D}$ and $C_{L}$ with $\alpha$ in round 9. In subgraph (a) and (c), we see that the $C_{D}$ predicted by MDF are closer to the real value. In the subgraph (b) and (d), the $C_{L}$ predicted by all methods are relatively similar and close to the real value.

In summary, experiments I demonstrated that the manifold-features extracted by the manifold-based airfoil geometric-feature extraction module indeed reflect the geometric shape of airfoils and the manifold-features can be used to generate airfoils based on their geometrical nature. In addition, experiments II demonstrated that the manifold-features and flight conditions can be fused by MDF to reduce the predicted errors of $C_{D}$ while keeping the same predicted accuracy level of $C_{L}$.

\section{Conclusion}
\label{section_conclusion}
The conclusions of our work are as follows:
\begin{enumerate}
\itemsep=0pt
\item  the geometric shape of an airfoil can be approximated by a set of self-intersection-free Bézier curves which are connected end to end to form a segmented smooth Riemannian manifold;
\item  Riemannian metric, as a sort of manifold-feature, can used to re-built smooth and approximated airfoils, and compared with Auto-Encoder, the MSE of re-built airfoil is reduced by 53.66\%;
\item  compared with MLP, the MSE of $C_{D}$ predicted by MDF is reduced by 35.37\% while keeping the same predicted accuracy level of $C_{L}$.
\end{enumerate}

From the results of above experiments, we see that the predicted errors of $C_{D}$ can be significantly reduced by MDF which fuses only one sort of manifold-feature (i.e., Riemannian metric) with flight conditions. In the future, more latent manifold-features (e.g., the curvature, torsion and Riemannian connection) of the segmented smooth manifold build from airfoil curves will be defined and extracted. We firmly believe that 1) multiple manifold-features can further reflect the geometrical nature of airfoils; 2) the fusion of multiple manifold-features with flight conditions can obtain more accurate predictions of airfoil performances.

% if have a single appendix:
%\appendix[Proof of the Zonklar Equations]
% or
%\appendix  % for no appendix heading
% do not use \section anymore after \appendix, only \section*
% is possibly needed

% use appendices with more than one appendix
% then use \section to start each appendix
% you must declare a \section before using any
% \subsection or using \label (\appendices by itself
% starts a section numbered zero.)
%

\section{Acknowledgment}
The authors would like to thank Dr. Wenzheng Wang, Research Fellow, from University of Electronic Science and Technology of China for valuable suggestions with our project. In addition, the authors would also like to thank Dr. Yanqing Cheng, Associate Research Fellow, and Dr. Weiqi Qian, Research Fellow, both from China Aerodynamics Research and Development Center for their valuable suggestions with this paper.

% Can use something like this to put references on a page
% by themselves when using endfloat and the captionsoff option.
\ifCLASSOPTIONcaptionsoff
  \newpage
\fi

% trigger a \newpage just before the given reference
% number - used to balance the columns on the last page
% adjust value as needed - may need to be readjusted if
% the document is modified later
%\IEEEtriggeratref{8}
% The "triggered" command can be changed if desired:
%\IEEEtriggercmd{\enlargethispage{-5in}}

% references section

% can use a bibliography generated by BibTeX as a .bbl file
% BibTeX documentation can be easily obtained at:
% http://mirror.ctan.org/biblio/bibtex/contrib/doc/
% The IEEEtran BibTeX style support page is at:
% http://www.michaelshell.org/tex/ieeetran/bibtex/
\bibliographystyle{IEEEtran}
% argument is your BibTeX string definitions and bibliography database(s)
\bibliography{references}
%
% <OR> manually copy in the resultant .bbl file
% set second argument of \begin to the number of references
% (used to reserve space for the reference number labels box)
%\begin{thebibliography}{1}
%
%\bibitem{IEEEhowto:kopka}
%H.~Kopka and P.~W. Daly, \emph{A Guide to \LaTeX}, 3rd~ed.\hskip 1em plus
%  0.5em minus 0.4em\relax Harlow, England: Addison-Wesley, 1999.
%
%\end{thebibliography}

% biography section
%
% If you have an EPS/PDF photo (graphicx package needed) extra braces are
% needed around the contents of the optional argument to biography to prevent
% the LaTeX parser from getting confused when it sees the complicated
% \includegraphics command within an optional argument. (You could create
% your own custom macro containing the \includegraphics command to make things
% simpler here.)
%\begin{IEEEbiography}[{\includegraphics[width=1in,height=1.25in,clip,keepaspectratio]{mshell}}]{Michael Shell}
% or if you just want to reserve a space for a photo:

\begin{IEEEbiography}[{\includegraphics[width=1in,height=1.25in,clip,keepaspectratio]{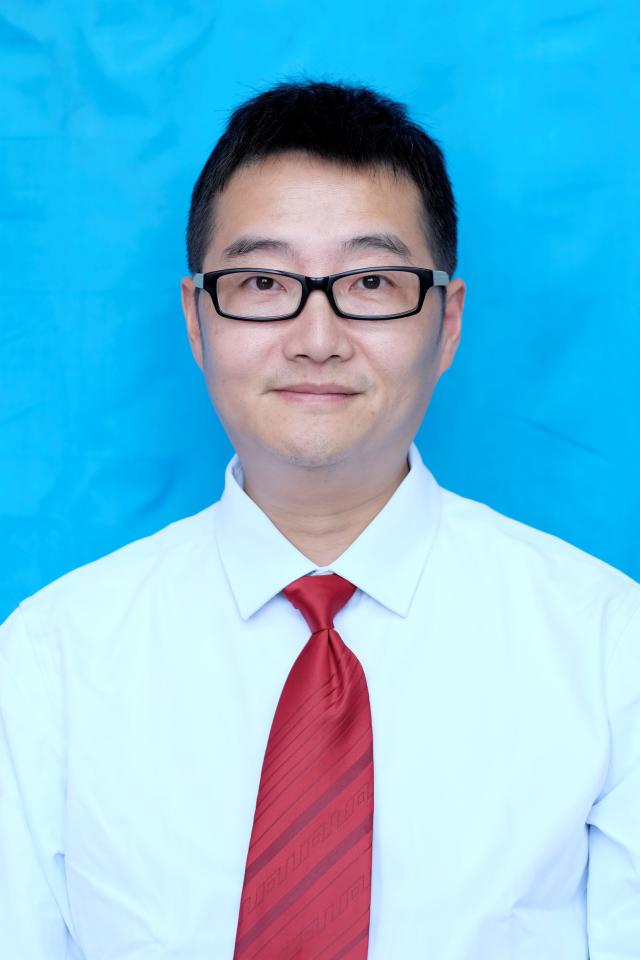}}]{Yu Xiang} received his B.S, M.S. and Ph.D. degrees from the University of Electronic Science and Technology of China (UESTC), Chengdu, Sichuan, China, in 1995, 1998 and 2003, respectively. He joined the UESTC in 2003 and became associate professor in 2006. From 2014-2015, he was a visiting scholar at the University of Melbourne, Australia. His current research interests include computer networks, intelligent transportation systems and deep learning.
\end{IEEEbiography}

\begin{IEEEbiography}[{\includegraphics[width=1in,height=1.25in,clip,keepaspectratio]{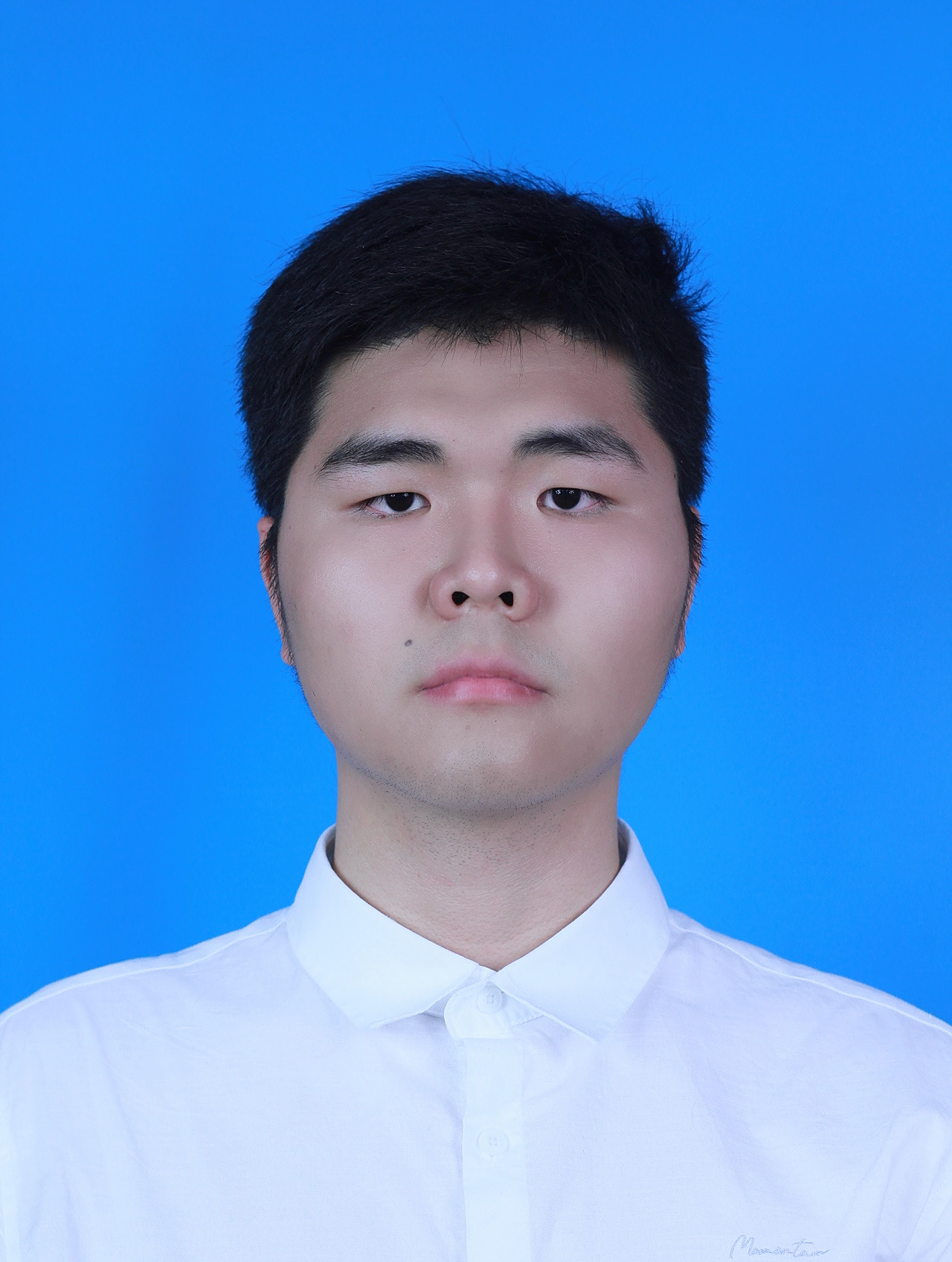}}]{Guangbo Zhang} received his B.S. degree in network engineering from the School of Computer sicience, South-Central Minzu University, Wuhan, Hubei, China, in 2020.He is currently pursuing a M.S. degree in computer science and technology from the School of Computer Science and Engineering, UESTC. His research fields include aerodynamic data modeling and deep learning.
\end{IEEEbiography}

\begin{IEEEbiography}[{\includegraphics[width=1in,height=1.25in,clip,keepaspectratio]{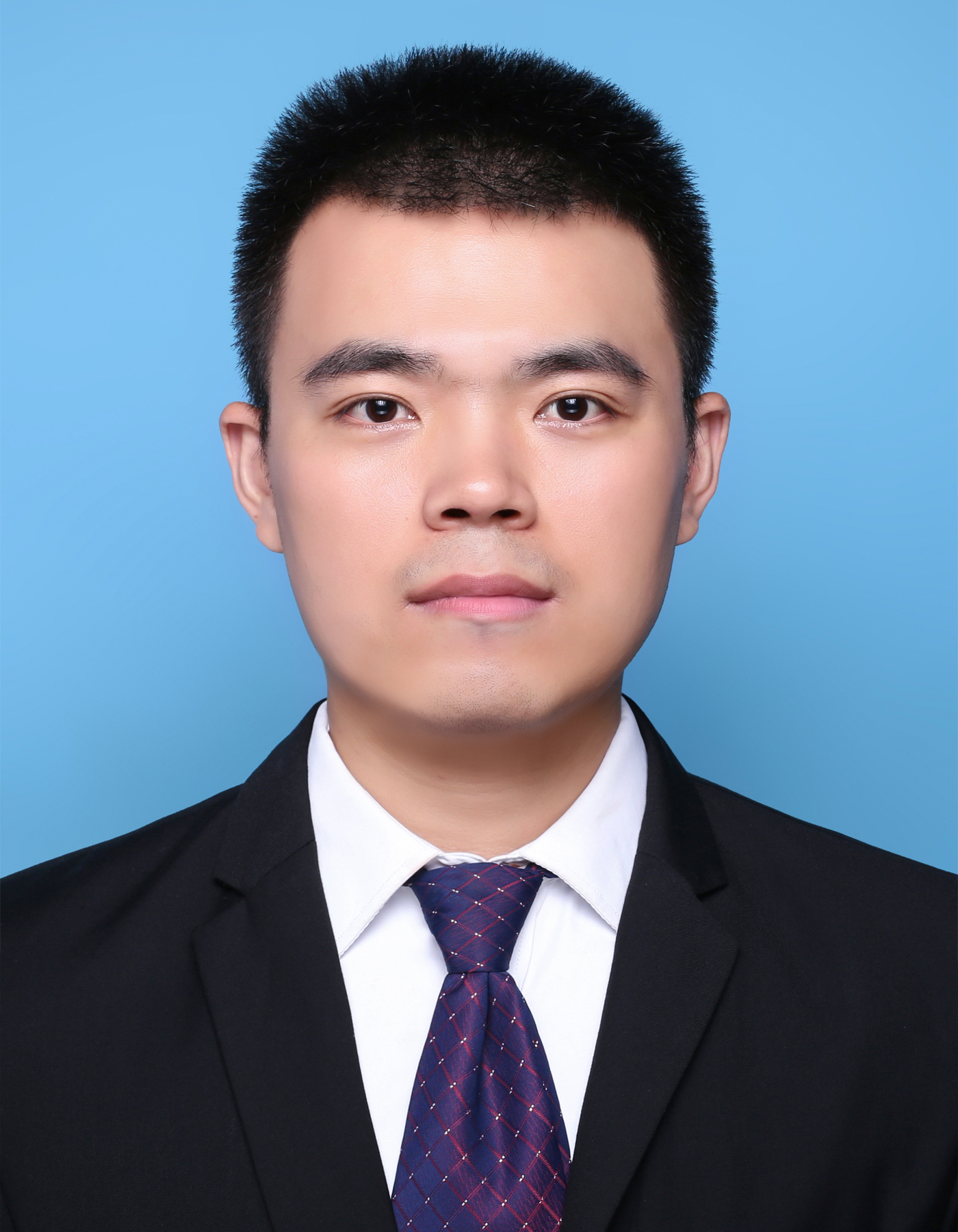}}]{Liwei Hu} received his B.S. degree in software engineering from the School of Software, Hebei Normal University, Shijiazhuang, Hebei, China, in 2014 as well as an M.S. degree in computer technology from the University of Electronic Science and Technology of China (UESTC), Chengdu, Sichuan, China, in 2018. He is currently pursuing a Ph.D. degree in computer science and technology from the School of Computer Science and Engineering, UESTC. His research fields include aerodynamic data modeling, deep learning and pattern recognition.
\end{IEEEbiography}

\begin{IEEEbiography}[{\includegraphics[width=1in,height=1.25in,clip,keepaspectratio]{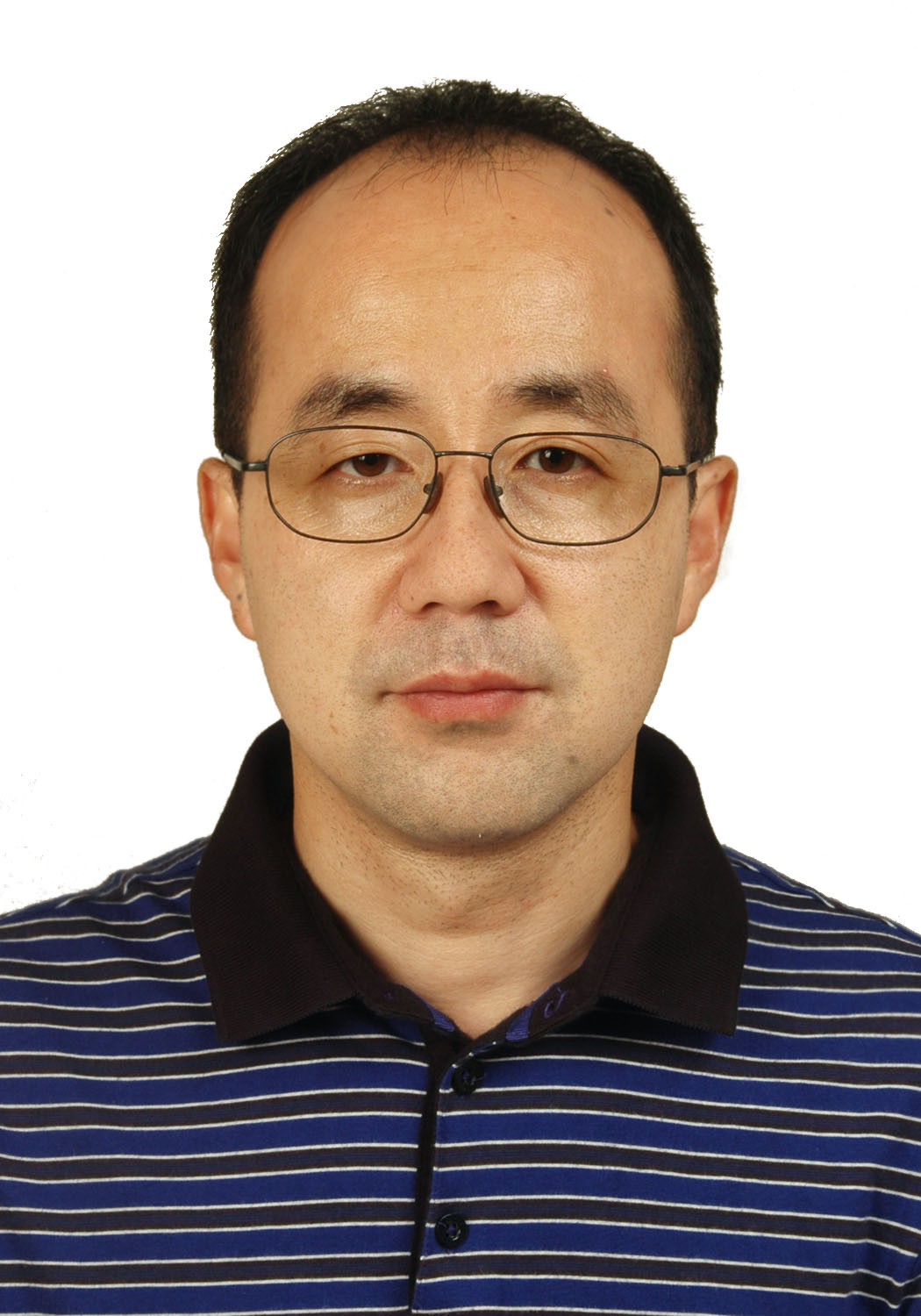}}]{Jun Zhang} received his B.S. and M.S. degrees in electronic engineering from the University of Electronic Science and Technology of China (UESTC) in 1995 and 1998, respectively. From 1998 to 2008, he worked as a senior researcher and engineer in CERNET. He is currently a lecturer at the School of Computer Science and Engineering, UESTC. His current research interests include software-defined networks, machine learning applied in network traffic engineering, and aerodynamics.
\end{IEEEbiography}

\begin{IEEEbiography}[{\includegraphics[width=1in,height=1.25in,clip,keepaspectratio]{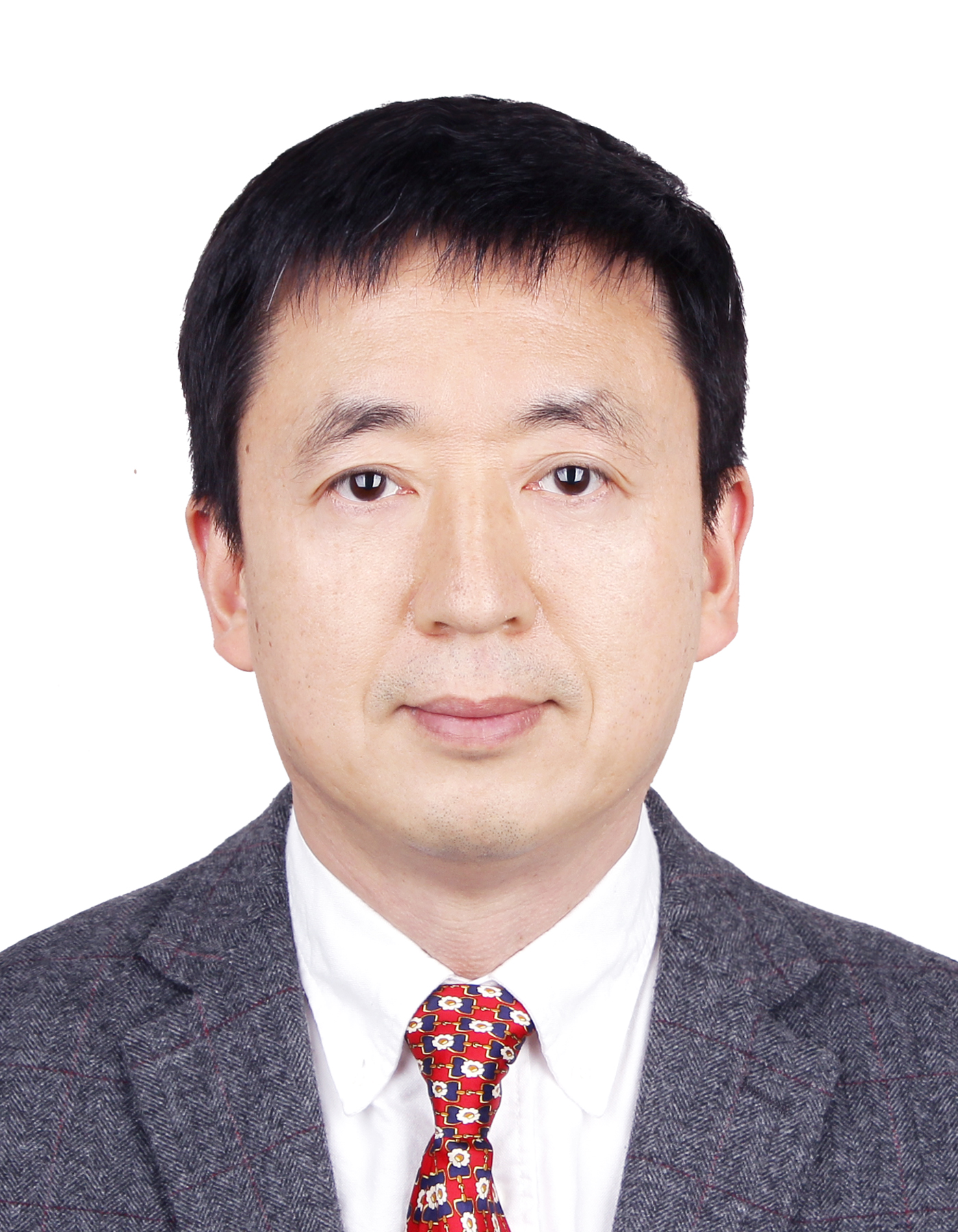}}]{Wenyong Wang} received his B.S. degree in computer science from BeiHang University, Beijing, China, in 1988 and M.S. and Ph.D. degrees from the University of Electronic Science and Technology (UESTC), Chengdu, China, in 1991 and 2011, respectively. He has been a professor in computer science and engineering at UESTC since 2006. Now he is also a special-term professor at Macau University of Science and Technology, a senior member of the Chinese Computer Federation, a member of the expert board of the China Education and Research Network (CERNET) and China Next Generation Internet. His main research interests include next generation Internet, software-defined networks, and software engineering.
\end{IEEEbiography}

% You can push biographies down or up by placing
% a \vfill before or after them. The appropriate
% use of \vfill depends on what kind of text is
% on the last page and whether or not the columns
% are being equalized.

%\vfill

% Can be used to pull up biographies so that the bottom of the last one
% is flush with the other column.
%\enlargethispage{-5in}

% that's all folks
\end{document}